\newtheorem{proposition}{Proposition}
\newtheorem{corollary}{Corollary}
\newtheorem{definition}{Definition}
\title{Adversarial Moment-Matching Distillation of Large Language Models}
\author{%
  Chen Jia \\
  SI-TECH Information Technology \\
  \texttt{jiachenwestlake@gmail.com} \\
}
\begin{document}

\maketitle

\begin{abstract}
Knowledge distillation (KD) has been shown to be highly effective in guiding a student model with a larger teacher model and achieving practical benefits in improving the computational and memory efficiency for large language models (LLMs). State-of-the-art KD methods for LLMs mostly rely on minimizing explicit distribution distance between teacher and student probability predictions. Instead of optimizing these mandatory behaviour cloning objectives, we explore an imitation learning strategy for KD of LLMs. In particular, we minimize the imitation gap by matching the action-value moments of the teacher's behavior from both on- and off-policy perspectives. To achieve this action-value moment-matching goal, we propose an adversarial training algorithm to jointly estimate the moment-matching distance and optimize the student policy to minimize it. Results from both task-agnostic instruction-following experiments and task-specific experiments demonstrate the effectiveness of our method and achieve new state-of-the-art performance.
\end{abstract}

\section{Introduction}
Large language models (LLMs) like GPT-4 \cite{achiam2023gpt} and LLaMA \cite{touvron2023llama} have revolutionized natural language processing, significantly enhancing the quality of text generation across various tasks. This success is largely due to the extensive scale of training data and the substantial increase in model parameters \cite{kaplan2020scaling}. However, the high computational and memory requirements of these models present significant challenges for practical deployment. To address these issues, knowledge distillation (KD) \cite{hinton2015distilling} has emerged as a key technique. KD involves transferring knowledge from a large, complex teacher model to a smaller, more efficient student model, thereby maintaining high performance while reducing resource demands. Most distillation methods for auto-regressive text generation models, including LLMs, employ metrics of probability distribution distance, such as Kullback-Leibler (KL) divergence \cite{kim2016sequence} and reverse KL divergence \cite{gu2023minillm}, aiming to align the token-level probability distributions between the teacher and student models.

The distribution matching-based distillation methods can be viewed as behavior cloning on a decision-making problem from the perspective of imitation learning \cite{lin2020autoregressive,gu2023minillm,agarwal2024policy}. Based on this concept, early works based on the teacher-generated outputs \cite{kim2016sequence} or a supervised dataset \cite{sanh2019distilbert} can be viewed as an {\it off-policy} approach. Recent works further incorporate an {\it on-policy} approach, training the student on its self-generated outputs \cite{lin2020autoregressive}, using KL-based divergence \cite{gu2023minillm,agarwal2024policy,ko2024distillm} and total variation (TV) distance \cite{wen2023f}. Accordingly, such distribution matching-based methods face the sub-optimality problem. The objective functions aimed at aligning the probability distributions between the teacher and student models can be straightforward but cannot fully capture the goal of distilling language knowledge. First, intuitively, the correct output for an input can vary, and thus behavior cloning cannot capture the full knowledge of a teacher. Besides, there is no standardized definition for the quality of a generated output given an input, which makes it difficult to define the objective of knowledge distillation. This imposes a significant limitation on the generalization performance of the student model through distillation.

 \begin{figure}[t!]
 	\centering	
 	\includegraphics[width=0.95\linewidth]{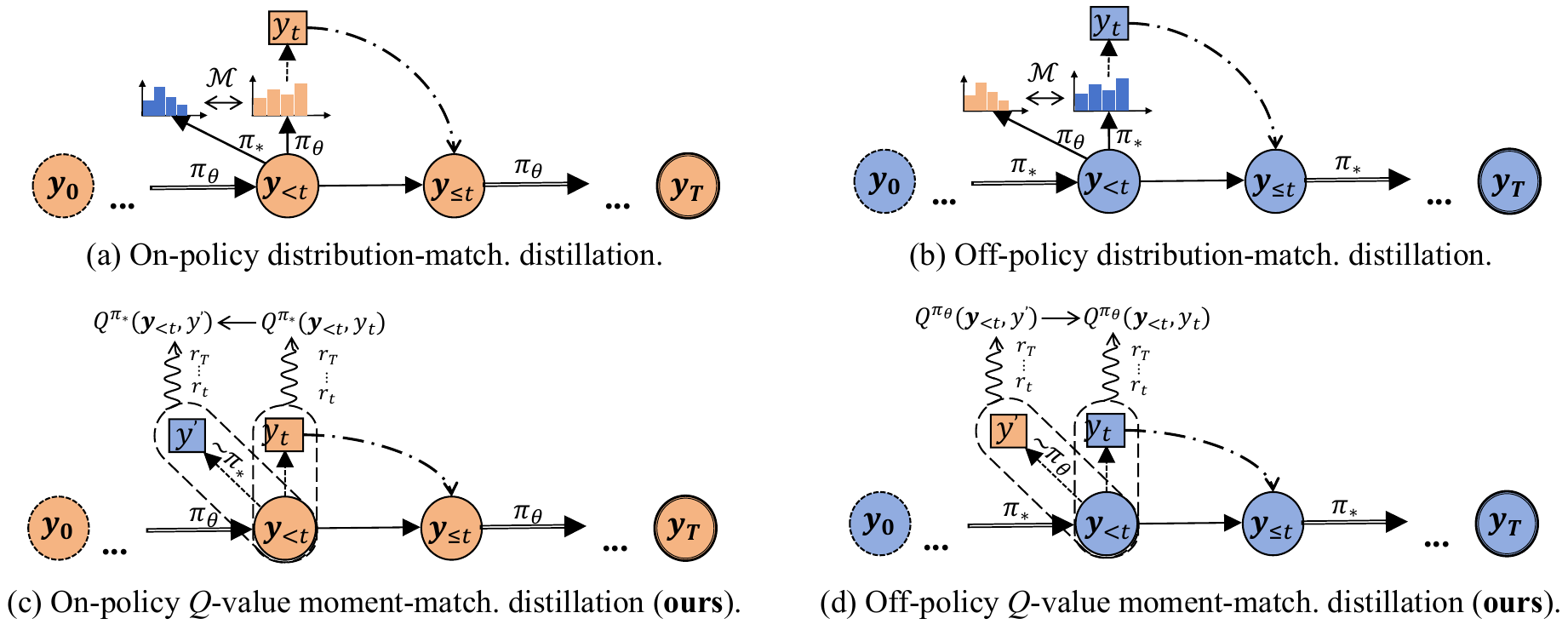}
 	\caption{The comparison between the distribution-matching-based distillation and the action-value moment-matching distillation is outlined. $\pi_{\theta}$ and $\pi_{*}$ denote the student policy and the teacher policy, respectively. For both on-policy (using student-generated outputs) and off-policy (using teacher-generated outputs) perspectives, our approach optimizes moment-matching of action-value functions ($Q$-functions) instead of minimizing the distribution distance measured by $\mathcal{M}$ = KL, RKL, TV, etc.}
 	\label{fig:overall}
 \end{figure}

To address the aforementioned issues, we employ a reinforcement learning (RL) formulation for the auto-regressive text generation problem and utilize the definition of imitation gap to describe the high-level goal of knowledge distillation. Additionally, we address the imitation gap for KD by matching moments of the action-value function, which reflects the quality of token-level predictions for the entire output. In addressing the action-value function, we adopt the approach of Swamy et al. \cite{swamy2021moments}, considering a two-player minimax game between the language policy and the action-value functions, aiming to minimize an upper bound of the moment-matching objective. For this purpose, we introduce an adversarial training algorithm based on the policy gradient to jointly optimize the on-/off-policy objectives. Figure \ref{fig:overall} illustrates the overall approach.

We evaluate our approach on both the instruction-following dataset and three task-specific datasets for text summarization, machine translation, and commonsense reasoning. Results demonstrate that the proposed adversarial moment-matching approach effectively optimizes the moment-matching distance of the imitation gap and outperforms state-of-the-art KD methods and a range of distribution-matching-based methods. The code and implementation is released at \url{https://github.com/jiachenwestlake/MMKD}.

\section{Related Work}
\noindent\textbf{Distillation of large language models.}
There has been an increasing interest in knowledge distillation (KD) of auto-regressive LMs, especially concerning large language models (LLMs) \cite{wu2024rethinking,xu2024survey}. This process effectively transfers elicited knowledge from teacher LLMs to smaller student models, aiming to compress the large size of neural network parameters and make LLMs more efficient. Sequence-level KD (SeqKD) \cite{kim2016sequence} is a variation of supervised fine-tuning (SFT) in KD. It can be viewed as the simplest method for distillation of black-box LLMs by fine-tuning the student model with teacher-generated outputs. This method has been extensively used for LLMs and has achieved success \cite{taori2023stanford,chiang2023vicuna}. In contrast, distillation of white-box LLMs can make full use of internal information of the teacher model, such as logits \cite{sanh2019distilbert,wen2023f} and hidden states \cite{liang2023less}, for distribution alignment, making it more effective and efficient for KD. However, unlike previous work that explicitly clones the distribution of teacher LLMs into student models, this work learns an auxiliary $Q$-value function to guide KD.

\noindent\textbf{Distillation via distribution matching.}
Most promising results in the distillation of white-box LLMs are achieved by minimizing divergence between the probability distributions of the teacher model and student models. Kullback-Leibler (KL) divergence, reverse Kullback-Leibler (RKL) divergence, and Jensen–Shannon (JS) divergence are three widely used KD objectives for auto-regressive LMs \cite{wen2023f,gu2023minillm,agarwal2024policy,ko2024distillm,wu2024rethinking}. Wen et al. \cite{wen2023f} have shown the equivalent formulations of sequence-level KL, RKL, JS divergences, and the step-wise terms. Additionally, they also present the strong performance of step-wise total variation (TV) distance for KD, which can upper bound the sequence-level term. As a result, most recent works focus on on-policy approaches for KD \cite{agarwal2024policy} and combine the real-time-generated outputs by students (on-policy) with the real-time-generated outputs by teachers (or from supervised datasets) (off-policy). Following this line, Gu et al. \cite{gu2023minillm} further propose a policy gradient-based method to address the high variance issues of RKL-based methods, while Ko et al. \cite{ko2024distillm} propose a more efficient and effective method using a skew KL divergence loss and an adaptive off-policy approach. We also focus on a combination of on-policy and off-policy objectives for KD, but we introduce a more sophisticated moment-matching approach instead of directly using the well-studied distribution-matching metrics such as KL, RKL, JS divergences, and TV distance.

\noindent\textbf{Distillation via reinforcement learning.}
In a common formulation of RL in text generation \cite{yu2017seqgan,pang2021text,hao2022teacher}, an auto-regressive model can be viewed as a language policy, making decisions on the next token (action) based on the currently generated sequence (state). From this perspective, KD corresponds to behavior cloning in imitation learning \cite{kim2016sequence,ciosek2021imitation,gu2023minillm,agarwal2024policy}. For imitation learning in text generation, early works such as SeqGAN \cite{yu2017seqgan} and TextGAIL \cite{wu2021textgail} utilize a generative adversarial framework to balance between the reward model, optimized by discriminating generated/real-word text, and the language policy, optimized by policy gradient-based methods using the reward model. Existing work on KD via imitation learning refers to ImitKD \cite{lin2020autoregressive}, which optimizes the student policy by learning from demonstrations of the teacher model. RL-based distillation can also be especially relevant for leveraging the feedback from the teacher to train student models \cite{bai2022constitutional,cui2023ultrafeedback}, in which teacher models are used to generate the feedback data for training a reward model. We build our method upon a RL-based imitation learning framework. However, unlike previous work \cite{kim2016sequence,gu2023minillm,agarwal2024policy}, we propose an adversarial moment-matching approach to enhance behavior cloning.

\section{Method}
\subsection{Notations and Definitions} \label{sec:notations}
In this section, we consider the text generation task as a decision-making process and give a corresponding reinforcement learning (RL) formulation.

\noindent\textbf{Text generation.} 
Given an input $\boldsymbol{x}$, the auto-regressive generation task in our work aims to generate a sequence of tokens as the output $(y_1, \ldots, y_T)$, where $y_t$ comes from a vocabulary $\mathcal{V}$. For simplicity, we define $\boldsymbol{y}=(y_0,y_1,\ldots,y_T)$ as the full input-output sequence, where $y_0 = \boldsymbol{x}$. The generator is modeled by a conditional probability distribution $p_{\theta}(\boldsymbol{y}|\boldsymbol{x}) = \Pi_{t=1}^T p_{\theta}(y_t|\boldsymbol{y}_{<t})$, where $\boldsymbol{y}_{<t}$ denotes the prefix $(y_0, y_1, \ldots, y_{t-1})$. 

\noindent\textbf{RL formulation.} 
We formulate text generation as a sequential decision-making process. At each time step $t \in \{1, \ldots, T\}$, the policy $\pi_{\theta}$ takes an action $y_t \in \mathcal{V}$ based on the current state $\boldsymbol{y}_{<t} \in {\mathcal{Y}}$, transits to the next state $\boldsymbol{y}_{< t+1} \in {\mathcal{Y}}$\footnote{In text generation, the state-transition is commonly assumed to be deterministic \cite{yu2017seqgan,pang2021text}, i.e., $p(\boldsymbol{y}_{< t+1}| \boldsymbol{y}_{<t},y_t)=1$.} and receives a reward $r(\boldsymbol{y}_{<t},y_t)$ by a reward function $r: {\mathcal{Y}} \times \mathcal{V} \rightarrow \mathbb{R}$. The policy corresponds to the generation model $\pi_{\theta}(y_t| \boldsymbol{y}_{<t}) = p_{\theta}(y_t|\boldsymbol{y}_{<t})$. A trajectory $\tau \sim \pi_{\theta} = \{\boldsymbol{y}_{<t}, y_t\}_{t=1,\ldots,T}$ refers to a sequence of state-action pairs generated by first sampling a state $y_0=\boldsymbol{x}$ from $p_{\boldsymbol{x}}$ and then repeatedly sampling an action
$y_t$ from $\pi_{\theta}(\cdot|\boldsymbol{y}_{<t})$ and obtain the next state $\boldsymbol{y}_{< t+1}$ for $T$
time steps. We also define our value function and $Q$-value function as
$V^{\pi_{\theta}}(\boldsymbol{y}_{<t}) = \mathbb{E}_{\tau \sim \pi_{\theta}|\boldsymbol{y}_{<t}} \sum_{t'=t}^T r(\boldsymbol{y}_{<t'},y_{t'})$ and $Q^{\pi_{\theta}}(\boldsymbol{y}_{<t},y_t) = \mathbb{E}_{\tau \sim \pi_{\theta}|\boldsymbol{y}_{<t},y_t} \sum_{t'=t}^T r(\boldsymbol{y}_{<t'},y_{t'})$.
We define the RL objective in our generation task to maximize the performance $J(\pi_\theta) = \mathbb{E}_{\tau \sim \pi_{\theta} } \left[ \sum_{t=1}^T r(\boldsymbol{y}_{<t}, y_t) \right]$. 
\subsection{Knowledge Distillation as Moment-Matching Imitation Learning}
Based on the RL formulation of auto-regressive generation, we can view the goal of knowledge distillation at a high-level as to bridge the performance gap between the teacher policy and the student policy.
\begin{definition}[\bf Imitation gap]
	We define the imitation gap between the teacher policy and student policy as:
\begin{align} \label{eq:imitgap}
	J(\pi_*) - J(\pi_{\theta}) = \mathbb{E}_{\tau \sim \pi_*} \left[ \sum_{t=1}^T r(\boldsymbol{y}_{<t}, y_t) \right] - \mathbb{E}_{\tau' \sim \pi_{\theta}} \left[ \sum_{t=1}^T r(\boldsymbol{y}'_{<t}, y'_t) \right],
\end{align}
where $\tau \sim \pi_* = \{\boldsymbol{y}_{<t}, y_t\}_{t=1,\ldots,T}$ denotes the trajectory of teacher policy and $\tau' \sim \pi_{\theta} = \{\boldsymbol{y}'_{<t}, y'_t\}_{t=1,\ldots,T}$ denotes the trajectory of student policy.
\end{definition}

From the perspective of imitation learning \cite{swamy2021moments,swamy2022sequence}, the objective of distillation from the teacher policy $\pi_{*}$ to the student policy $\pi_{\theta}$ can be represented as to minimize the imitation gap of Eq. (\ref{eq:imitgap}) w.r.t. the parameters of student policy $\theta$. An direct idea from Eq. (\ref{eq:imitgap}) is to use moment matching over the reward to optimize the imitation gap \cite{swamy2021moments}. However, we actually care about the long-term reward, at each time step, we should consider the accumulated reward in the future output rather than the immediate reward to the fitness of previous tokens (prefix). To this end, we can alternatively use the $Q$-value function (def. in Sec. \ref{sec:notations}) for each timestep to represent the overall reward from the current timestep to the last timestep. Similar to \cite{swamy2021moments}, we can apply the Performance Difference Lemma (PDL) to expand the imitation gap in Eq. (\ref{eq:imitgap}) into either off-policy or on-policy expressions.

\begin{proposition}[\bf Off-policy bound of imitation gap] \label{pro1}
	Let $\mathcal{F}_Q$ denote the set of $Q$-value functions induced by sampling actions from $\pi_{\theta}$, then we have:
	\begin{align} \label{eq:pro1}
		\begin{split}
			J(\pi_*) - J(\pi_{\theta}) \leq \sup_{f \in \mathcal{F}_{Q}} \mathop\mathbb{E}\limits_{\boldsymbol{x} \sim p_{\boldsymbol{x}} \atop \tau \sim \pi_*|\boldsymbol{y}_{0}=\boldsymbol{x}} \underbrace{\left[ \sum_{t=1}^T \left( f(\boldsymbol{y}_{<t}, y_t) - \mathop\mathbb{E}\limits_{y \sim \pi_{\theta}(\cdot| \boldsymbol{y}_{<t})} \left[ f(\boldsymbol{y}_{<t},y)\right] \right)\right] }_{\mathcal{U}^{\rm off}(\tau,f,\theta):=} 
		\end{split}
	\end{align}
	
	In the following sections, we will use $\mathcal{U}^{\rm off}(\tau,f,\theta)$ to represent a sampled off-policy imitation gap with an trajectory $\tau \sim \pi_*|\boldsymbol{y}_{0}=\boldsymbol{x}$ w.r.t. a $Q$-value function $f$ and a teacher policy $\pi_{\theta}$.
\end{proposition}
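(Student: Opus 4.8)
The plan is to obtain the displayed inequality as an immediate weakening of an \emph{exact} identity produced by the Performance Difference Lemma (PDL), so the real content is just setting up the PDL in the autoregressive MDP of Section~\ref{sec:notations}. First I would fix the student policy $\pi_{\theta}$ and adopt the terminal convention $V^{\pi_{\theta}}(\boldsymbol{y}_{<T+1})=0$, then telescope: along any trajectory, $\sum_{t=1}^{T} r(\boldsymbol{y}_{<t},y_t) = V^{\pi_{\theta}}(\boldsymbol{x}) + \sum_{t=1}^{T}\bigl(r(\boldsymbol{y}_{<t},y_t)+V^{\pi_{\theta}}(\boldsymbol{y}_{<t+1})-V^{\pi_{\theta}}(\boldsymbol{y}_{<t})\bigr)$, using $\boldsymbol{y}_{<1}=(\boldsymbol{x})$. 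Taking $\mathbb{E}_{\boldsymbol{x}\sim p_{\boldsymbol{x}},\,\tau\sim\pi_*|\boldsymbol{y}_0=\boldsymbol{x}}$ of both sides turns the left side into $J(\pi_*)$ and the leading term into $\mathbb{E}_{\boldsymbol{x}}[V^{\pi_{\theta}}(\boldsymbol{x})]=J(\pi_{\theta})$; for the remaining sum I would invoke the deterministic-transition assumption (the paper's footnote) so that $r(\boldsymbol{y}_{<t},y_t)+V^{\pi_{\theta}}(\boldsymbol{y}_{<t+1})=Q^{\pi_{\theta}}(\boldsymbol{y}_{<t},y_t)$ with no extra expectation over next states. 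This gives the PDL identity $J(\pi_*)-J(\pi_{\theta}) = \mathbb{E}_{\boldsymbol{x},\,\tau\sim\pi_*}\bigl[\sum_{t=1}^{T}\bigl(Q^{\pi_{\theta}}(\boldsymbol{y}_{<t},y_t)-V^{\pi_{\theta}}(\boldsymbol{y}_{<t})\bigr)\bigr]$.

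Next I would rewrite the state-value along the teacher trajectory as a one-step expectation of the action-value under the student, $V^{\pi_{\theta}}(\boldsymbol{y}_{<t}) = \mathbb{E}_{y\sim\pi_{\theta}(\cdot|\boldsymbol{y}_{<t})}[Q^{\pi_{\theta}}(\boldsymbol{y}_{<t},y)]$, which is immediate from the definitions in Section~\ref{sec:notations}. Substituting, the summand becomes exactly $\mathcal{U}^{\rm off}(\tau,f,\theta)$ evaluated at $f=Q^{\pi_{\theta}}$, so $J(\pi_*)-J(\pi_{\theta}) = \mathbb{E}_{\boldsymbol{x}\sim p_{\boldsymbol{x}},\,\tau\sim\pi_*|\boldsymbol{y}_0=\boldsymbol{x}}\bigl[\mathcal{U}^{\rm off}(\tau,Q^{\pi_{\theta}},\theta)\bigr]$. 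Since $Q^{\pi_{\theta}}\in\mathcal{F}_Q$ by the definition of $\mathcal{F}_Q$ as the $Q$-functions induced by sampling actions from $\pi_{\theta}$, replacing $Q^{\pi_{\theta}}$ by the supremum over $f\in\mathcal{F}_Q$ can only increase the right-hand side, which yields Eq.~(\ref{eq:pro1}).

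I do not anticipate a substantive obstacle; the content is the classical PDL specialized to this MDP, in the spirit of Swamy et al.~\cite{swamy2021moments}. The points that need care are all bookkeeping: getting the terminal boundary condition right so the telescoping sum closes to exactly $J(\pi_{\theta})$; using the deterministic-transition footnote to identify the one-step Bellman backup with $Q^{\pi_{\theta}}$ rather than carrying a transition expectation; and confirming that $\mathcal{F}_Q$ is defined to contain $Q^{\pi_{\theta}}$, which is precisely what legitimizes the final supremum step (otherwise the bound could be vacuous or false). If one wanted to avoid the deterministic-transition shortcut, the same argument goes through with the transition expectation reinstated, at the cost of slightly heavier notation.
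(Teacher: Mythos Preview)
Your proposal is correct and follows essentially the same route as the paper's proof: telescope to obtain the PDL identity $J(\pi_*)-J(\pi_{\theta})=\mathbb{E}_{\boldsymbol{x},\,\tau\sim\pi_*}\bigl[\sum_{t}\bigl(Q^{\pi_{\theta}}(\boldsymbol{y}_{<t},y_t)-V^{\pi_{\theta}}(\boldsymbol{y}_{<t})\bigr)\bigr]$, rewrite $V^{\pi_{\theta}}$ as $\mathbb{E}_{y\sim\pi_{\theta}}[Q^{\pi_{\theta}}]$, and then pass to the supremum over $\mathcal{F}_Q$. Your explicit handling of the terminal boundary condition and the deterministic-transition step is, if anything, more careful than the paper's own derivation.
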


\begin{proof}
	The derivation is mostly followed Swamy et al. \cite{swamy2021moments}. See Appendix \ref{apdx:pro1} for the complete derivation.
\end{proof}

The off-policy upper bound of the imitation gap in Proposition \ref{pro1} only requires a collected dataset of teacher-generated trajectories to be evaluated and minimized.

\begin{proposition}[\bf On-policy bound of imitation gap] \label{pro2}
	Let $\mathcal{F}_{Q_*}$ denote the set of $Q$ functions induced by sampling actions from $\pi_*$, then we have:
	\begin{align} \label{eq:pro2}
		\begin{split}
			J(\pi_*) - J(\pi_{\theta}) \leq \sup_{f\in \mathcal{F}_{Q_*}} \mathop\mathbb{E}\limits_{\boldsymbol{x} \sim p_{\boldsymbol{x}} \atop \tau \sim	\pi_{\theta}|\boldsymbol{y}_{0}=\boldsymbol{x}}  \underbrace{\left[ \sum_{t=1}^T \left( \mathop\mathbb{E}\limits_{y\sim \pi_*(\cdot|\boldsymbol{y}_{<t})} \left[f(\boldsymbol{y}_{<t},y) \right] - f(\boldsymbol{y}_{<t}, y_t) \right)\right]}_{\mathcal{U}^{\rm on}(\tau,f):=}  
		\end{split}
	\end{align}
	In the following sections, we will use $\mathcal{U}^{\rm on}(\tau,f)$ to represent a sampled on-policy imitation gap with an trajectory $\tau \sim \pi_{\theta}|\boldsymbol{y}_{0}=\boldsymbol{x}$ w.r.t. a $Q$-value function $f$ given the teacher policy $\pi_{*}$.
\end{proposition}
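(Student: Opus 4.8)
The plan is to derive Proposition \ref{pro2} by the same route as Proposition \ref{pro1}, only rolling in with the \emph{student} policy $\pi_{\theta}$ rather than the teacher, so that the comparator advantage is that of $\pi_{*}$. The single ingredient is the Performance Difference Lemma (PDL), specialized to the finite-horizon, undiscounted, deterministic-transition MDP of Section \ref{sec:notations}.

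First I would prove the PDL identity
\[
J(\pi_{\theta}) - J(\pi_{*}) \;=\; \mathbb{E}_{\boldsymbol{x} \sim p_{\boldsymbol{x}},\; \tau \sim \pi_{\theta}|\boldsymbol{y}_{0} = \boldsymbol{x}} \left[ \sum_{t=1}^{T} \left( Q^{\pi_{*}}(\boldsymbol{y}_{<t}, y_t) - V^{\pi_{*}}(\boldsymbol{y}_{<t}) \right) \right].
\]
This is a telescoping argument: for a student trajectory $\tau$, the sum $\sum_{t=1}^{T} \big( V^{\pi_{*}}(\boldsymbol{y}_{<t+1}) - V^{\pi_{*}}(\boldsymbol{y}_{<t}) \big)$ collapses to $-V^{\pi_{*}}(\boldsymbol{x})$ under the convention $V^{\pi_{*}}(\text{terminal}) = 0$; adding it to $\sum_{t} r(\boldsymbol{y}_{<t}, y_t)$, taking $\mathbb{E}_{\boldsymbol{x},\, \tau \sim \pi_{\theta}}$, and using $\mathbb{E}_{\boldsymbol{x}}[V^{\pi_{*}}(\boldsymbol{x})] = J(\pi_{*})$ recovers the left side; on the right, the deterministic transition $\boldsymbol{y}_{<t+1} = (\boldsymbol{y}_{<t}, y_t)$ together with the Bellman relation $Q^{\pi_{*}}(\boldsymbol{y}_{<t}, y_t) = r(\boldsymbol{y}_{<t}, y_t) + V^{\pi_{*}}(\boldsymbol{y}_{<t+1})$ turns each summand into the advantage $Q^{\pi_{*}} - V^{\pi_{*}}$.

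Then I would negate the identity and rewrite $V^{\pi_{*}}(\boldsymbol{y}_{<t}) = \mathbb{E}_{y \sim \pi_{*}(\cdot|\boldsymbol{y}_{<t})}\big[ Q^{\pi_{*}}(\boldsymbol{y}_{<t}, y) \big]$, which is immediate from the definitions of $V^{\pi_{*}}$ and $Q^{\pi_{*}}$, to get
\[
J(\pi_{*}) - J(\pi_{\theta}) \;=\; \mathbb{E}_{\boldsymbol{x} \sim p_{\boldsymbol{x}},\; \tau \sim \pi_{\theta}|\boldsymbol{y}_{0} = \boldsymbol{x}} \left[ \sum_{t=1}^{T} \left( \mathbb{E}_{y \sim \pi_{*}(\cdot|\boldsymbol{y}_{<t})}\big[ Q^{\pi_{*}}(\boldsymbol{y}_{<t}, y) \big] - Q^{\pi_{*}}(\boldsymbol{y}_{<t}, y_t) \right) \right].
\]
The right-hand side is exactly $\mathbb{E}\big[ \mathcal{U}^{\rm on}(\tau, f) \big]$ at the particular choice $f = Q^{\pi_{*}}$. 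Since $\mathcal{F}_{Q_{*}}$ is the class of $Q$-functions induced by sampling actions from $\pi_{*}$, it contains $Q^{\pi_{*}}$ (the action-value function of $\pi_{*}$ under the true reward $r$), so replacing $Q^{\pi_{*}}$ by the worst-case $f \in \mathcal{F}_{Q_{*}}$ can only increase the value, which yields the claimed bound.

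There is no deep obstacle here — the content is essentially bookkeeping — but two points need care. The first is making the telescoping PDL rigorous in the undiscounted finite-horizon setting: fixing the terminal-value convention and the deterministic-transition relation $Q^{\pi_{*}} = r + V^{\pi_{*}}(\text{next})$ so that the PDL holds with exact equality rather than approximately. The second is verifying the membership $Q^{\pi_{*}} \in \mathcal{F}_{Q_{*}}$ for the class exactly as defined in the statement, since it is precisely this membership that licenses passing from the exact identity to the supremum upper bound. Both steps are the mirror image (with the roles of $\pi_{*}$ and $\pi_{\theta}$ interchanged) of the corresponding steps in Swamy et al. \cite{swamy2021moments} and in the proof of Proposition \ref{pro1}, so I would present the full details in the appendix in parallel with that proof.
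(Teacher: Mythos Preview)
Your proposal is correct and follows essentially the same route as the paper's proof: apply the Performance Difference Lemma with rollouts under $\pi_{\theta}$ and comparator value functions of $\pi_{*}$, use the Bellman relation (with deterministic transitions) to rewrite $r + V^{\pi_{*}}$ as $Q^{\pi_{*}}$, expand $V^{\pi_{*}} = \mathbb{E}_{y\sim\pi_{*}}[Q^{\pi_{*}}]$, and then pass to the supremum over $\mathcal{F}_{Q_{*}}$. Your write-up is, if anything, slightly more explicit about the telescoping/terminal-value bookkeeping and the membership $Q^{\pi_{*}} \in \mathcal{F}_{Q_{*}}$ than the paper's appendix.
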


\begin{proof}
	The derivation is mostly followed Swamy et al. \cite{swamy2021moments}. See Appendix \ref{apdx:pro2} for the complete derivation.
\end{proof}

It is notable from Proposition \ref{pro2} that the on-policy upper bound of the imitation gap requires interactions with the teacher to tell us what action they would take in any state visited by the student as well as on-policy samples from the student's current policy $\tau \sim \pi_{\theta}$.

In the remaining content of this section, we will exploring the relationship between the proposed upper bounds of the imitation gap and the existing distribution-matching objectives \cite{wen2023f}. At the beginning, we draw a general formulation of the state-of-the-art methods for distillation of LLMs \cite{wen2023f,gu2023minillm,agarwal2024policy,ko2024distillm} that rely on distribution-matching between the student's and teacher's predictions, through minimizing the step-wise probability distribution distance between the teacher policy and student policy. 
\begin{definition}[\bf Generalized step-wise distribution distance]\label{def:kdmetric}
	 The off-policy and on-policy versions are defined as follows,
	\begin{align}
		d_{\rm \mathcal{M}}^{\rm off}(\pi_{\theta},\pi_{*}) :=& \mathop\mathbb{E}\limits_{\boldsymbol{x} \sim p_{\boldsymbol{x}} \atop \tau \sim \pi_*|\boldsymbol{y}_{0}=\boldsymbol{x}}\!\!  \left[ \sum_{t=1}^T  {\mathcal{M}}(\pi_{*}(\cdot|\boldsymbol{y}_{<t}), \pi_{\theta}(\cdot|\boldsymbol{y}_{<t})) \right];\\
		d_{\mathcal{M}}^{\rm on}(\pi_{\theta},\pi_{*}) :=& \mathop\mathbb{E}\limits_{\boldsymbol{x} \sim p_{\boldsymbol{x}} \atop \tau \sim \pi_{\theta}|\boldsymbol{y}_{0}=\boldsymbol{x}}\!\!  \left[ \sum_{t=1}^T  {\mathcal{M}}(\pi_{*}(\cdot|\boldsymbol{y}_{<t}), \pi_{\theta}(\cdot|\boldsymbol{y}_{<t}))  \right],
    \end{align}
where $\mathcal{M}(\cdot,\cdot)$ denotes a distribution distance, consisting of total variation (TV) distance \cite{wen2023f} and Kullback-Leibler (KL)-based divergence \cite{gu2023minillm,agarwal2024policy}. Detailed definitions for these distances refer to Appendix \ref{apdx:kdmetric}.
\end{definition}

It is notable from Wen et al. \cite{wen2023f} that the sequence-level KL, RKL and JS divergences can be equivalently represented as the step-wise terms, and the sequence-level TV distance can be upper bounded by the step-wise terms, which can be actually implemented by algorithms. To make a connection with the step-wise distribution distance (Definition \ref{def:kdmetric}), we use the following definition.

\begin{definition}[\bf Distribution-matching formulation of moment-matching bounds] \label{def:dismm}
	Based on Definition \ref{def:kdmetric}, we can re-formulate the on-policy and off-policy moment-matching (MM) bounds (Proposition \ref{pro2} and Proposition \ref{pro1}, respectively) via step-wise distribution-matching, which can be defined as $d_{\mathcal{\rm MM}}^{\rm on}(\pi_{\theta},\pi_{*})$ and $d_{\rm MM}^{\rm off}(\pi_{\theta},\pi_{*})$ respectively, where the distance metric ${\rm MM}(\cdot, \cdot)$ can be defined as follows,
\begin{align}
\begin{split}
	{\rm MM}(\pi_{*}(\cdot|\boldsymbol{y}_{<t}), \pi_{\theta}(\cdot|\boldsymbol{y}_{<t})) &= \mathop\mathbb{E}\limits_{y\sim \pi_*(\cdot|\boldsymbol{y}_{<t})} \left[f_*(\boldsymbol{y}_{<t},y) \right] - \mathop\mathbb{E}\limits_{y' \sim \pi_{\theta}(\cdot| \boldsymbol{y}_{<t})} \left[ f_*(\boldsymbol{y}_{<t},y')\right], \\
	\text{\rm\bf Off-policy:}\ f_* &= \mathop{\arg\max}_{f\in \mathcal{F}_{Q_*}}{\mathop\mathbb{E}\limits_{\boldsymbol{x} \sim p_{\boldsymbol{x}} \atop \tau \sim	\pi_{*}|\boldsymbol{y}_{0}=\boldsymbol{x}}} \left[ \mathcal{U}^{\rm off}(\tau,f,\theta) \right]; \\
	\text{\rm\bf On-policy:}\ f_* &= \mathop{\arg\max}_{f\in \mathcal{F}_{Q}}{\mathop\mathbb{E}\limits_{\boldsymbol{x} \sim p_{\boldsymbol{x}} \atop \tau \sim	\pi_{\theta}|\boldsymbol{y}_{0}=\boldsymbol{x}}} \left[ \mathcal{U}^{\rm on}(\tau,f) \right]
\end{split}
\end{align}
\end{definition}

Under Definition \ref{def:dismm}, we observe that the main difference between the moment-matching bounds and other step-wise distribution distance, e.g., TV distance and KL-based divergences in formulation comes from the optimal $Q$-value function $f_{*}$, aiming to maximize the discrepancy of its expectations based on $\pi_*(\cdot|\boldsymbol{y}_{<t})$ {\it v.s.} $\pi_{\theta}(\cdot| \boldsymbol{y}_{<t})$ for each step $t \in \{1,2,\ldots,T\}$. To look deeper, we draw a connection between moment-matching bounds and the step-wise TV distance using the following corollary.
\begin{corollary} \label{cor1}
	Under a constrain on the class of $Q$-value functions: $\mathcal{F}_{Q_{\theta}} = \mathcal{F}_{Q_*} = \{f: \Vert f \Vert_{\infty} \leq 1\}$, the moment-matching bounds in Proposition \ref{pro1} and Proposition \ref{pro2} can be upper-bounded by the step-wise TV distance in Definition \ref{def:kdmetric}, Formally, for the off-policy and on-policy perspectives, we have:
	\begin{align}
	    J(\pi_*) - J(\pi_{\theta}) \leq& \sup_{f:\Vert f \Vert_{\infty} \leq 1} \mathop\mathbb{E}\limits_{\boldsymbol{x} \sim p_{\boldsymbol{x}} \atop \tau \sim \pi_*|\boldsymbol{y}_{0}=\boldsymbol{x}} \left[ \mathcal{U}^{\rm off}(\tau,f,\theta) \right]  \leq d_{\rm TV}^{\rm off}(\pi_{\theta},\pi_{*});\\
		J(\pi_*) - J(\pi_{\theta}) \leq& \sup_{f:\Vert f \Vert_{\infty} \leq 1} \mathop\mathbb{E}\limits_{\boldsymbol{x} \sim p_{\boldsymbol{x}} \atop \tau \sim	\pi_{\theta}|\boldsymbol{y}_{0}=\boldsymbol{x}} \left[ \mathcal{U}^{\rm on}(\tau,f) \right] \leq d_{\rm TV}^{\rm on}(\pi_{\theta},\pi_{*})
	\end{align} 
\end{corollary}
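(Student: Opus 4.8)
The plan is to establish each of the two inequalities in Corollary \ref{cor1} by separately verifying the left-hand bound (which follows directly from Proposition \ref{pro1} and Proposition \ref{pro2}) and then the right-hand bound (the genuinely new content). For the left-hand inequalities, I would simply instantiate $\mathcal{F}_Q$ and $\mathcal{F}_{Q_*}$ as the unit-$\ell_\infty$-ball $\{f:\Vert f\Vert_\infty\le 1\}$; since Proposition \ref{pro1} and Proposition \ref{pro2} hold for the specific function classes induced by the respective policies, and the claim here is that we may \emph{relax} those classes to the ambient unit ball, the point to check is that the moment-matching bounds remain valid upper bounds when the supremum is taken over a superset containing the relevant $Q$-function classes. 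Assuming $\mathcal{F}_{Q_*}$ and $\mathcal{F}_Q$ are contained in (or can be normalized into) $\{f:\Vert f\Vert_\infty\le 1\}$ up to the reward scaling implicit in the setup, taking the sup over the larger set only increases the right-hand side, so the imitation-gap inequalities carry over.

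For the right-hand inequalities, I would work term by term inside the time sum. Fix a prefix $\boldsymbol{y}_{<t}$ and a realized action $y_t$ drawn from the outer trajectory distribution. In the off-policy case the integrand is $f(\boldsymbol{y}_{<t},y_t)-\mathbb{E}_{y\sim\pi_\theta(\cdot|\boldsymbol{y}_{<t})}[f(\boldsymbol{y}_{<t},y)]$; taking the expectation over $y_t\sim\pi_*(\cdot|\boldsymbol{y}_{<t})$ in the outer expectation turns this into $\mathbb{E}_{y\sim\pi_*}[f(\boldsymbol{y}_{<t},y)]-\mathbb{E}_{y\sim\pi_\theta}[f(\boldsymbol{y}_{<t},y)]$, i.e.\ the difference of expectations of a single bounded function $f$ under the two action distributions. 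The key step is then the variational (dual) characterization of total variation distance: for any $f$ with $\Vert f\Vert_\infty\le 1$, $\mathbb{E}_{p}[f]-\mathbb{E}_{q}[f]\le 2\,\mathrm{TV}(p,q)$ (or $\le\mathrm{TV}(p,q)$ if one uses the convention $\Vert f\Vert_\infty\le 1/2$, matching whatever normalization the paper adopts in Appendix \ref{apdx:kdmetric}). Taking the supremum over all such $f$ gives exactly the step-wise $\mathrm{TV}$ term $\mathcal{M}(\pi_*(\cdot|\boldsymbol{y}_{<t}),\pi_\theta(\cdot|\boldsymbol{y}_{<t}))$ appearing in Definition \ref{def:kdmetric}; summing over $t$ and taking the outer expectation recovers $d_{\rm TV}^{\rm off}(\pi_\theta,\pi_*)$. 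The on-policy case is symmetric: the integrand $\mathbb{E}_{y\sim\pi_*}[f(\boldsymbol{y}_{<t},y)]-f(\boldsymbol{y}_{<t},y_t)$ has outer expectation over $y_t\sim\pi_\theta(\cdot|\boldsymbol{y}_{<t})$ equal to $\mathbb{E}_{y\sim\pi_*}[f]-\mathbb{E}_{y\sim\pi_\theta}[f]$, and the same dual bound applies, yielding $d_{\rm TV}^{\rm on}(\pi_\theta,\pi_*)$.

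One subtlety I would be careful about is the interchange of the supremum over $f$ with the outer expectation and the time sum: in the propositions the sup is \emph{outside} the expectation over trajectories, whereas the $\mathrm{TV}$ bound is naturally obtained \emph{pointwise} (a per-state sup). The direction we need is the easy one — $\sup_f \mathbb{E}[\,\cdot\,] \le \mathbb{E}[\sup_f \cdot\,]$ — so bounding each per-state term by its per-state $\mathrm{TV}$ and then summing/integrating gives a valid upper bound on $\sup_f\mathbb{E}[\mathcal{U}]$, which is all the corollary claims. I expect the main obstacle to be purely bookkeeping: making sure the normalization constant in the $\Vert f\Vert_\infty\le 1$ constraint matches the factor in the definition of step-wise $\mathrm{TV}$ used in Appendix \ref{apdx:kdmetric}, so that the final inequality comes out with the stated constant (no stray factor of $2$). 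Once that convention is pinned down, the proof is a one-line application of the TV dual representation inside the sum, and I would relegate the detailed constant-tracking to the appendix, stating in the main text only the term-by-term reduction and the variational bound.
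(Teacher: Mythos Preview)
Your proposal is correct and follows essentially the same route as the paper's proof: rewrite the per-step term as a difference of expectations under $\pi_*$ and $\pi_\theta$ (via the tower property on the sampled action), interchange the outer supremum with the expectation/sum (the paper labels this step ``Jensen's inequality''), and then invoke the integral-probability-metric characterization of TV, $\sup_{\Vert f\Vert_\infty\le 1}(\mathbb{E}_p[f]-\mathbb{E}_q[f])=\sum_y|p(y)-q(y)|$, which matches the paper's TV convention without any stray factor of $2$. Your extra care about the left-hand inequality is unnecessary here, since the corollary's hypothesis simply \emph{sets} $\mathcal{F}_{Q}=\mathcal{F}_{Q_*}=\{f:\Vert f\Vert_\infty\le 1\}$ rather than asking you to embed the policy-induced classes into the unit ball.
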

\begin{proof}
	See Appendix \ref{apdx:cor1} for the complete derivation.
\end{proof}

We can observe from Corollary \ref{cor1} that minimizing the step-wise TV distance can achieve sub-optimal results for the moment-matching bounds. Optimizing the moment-matching bounds can potentially achieve better optimization results for imitation learning objectives.

\subsection{Adversarial Training Algorithm}
\noindent\textbf{Optimization objective.}
As shown in previous work \cite{gu2023minillm,agarwal2024policy,ko2024distillm} that incorporating both the on-policy and off-policy distillation benefits effectiveness and efficiency. We thus consider a training objective to jointly minimize the on-policy upper bound of Proposition (\ref{pro2}) and the off-policy upper bound of Proposition (\ref{pro1}). 
Both the moment-matching objectives can be optimized by viewing the learning procedure as solving a game. More specifically, we consider a two-player minimax game between the student policy and the $Q$-value functions. To this end, we consider using parameter estimation methods for the $Q$-value functions and assume that both $\mathcal{F}_{Q}$ and $\mathcal{F}_{Q_*}$ can be represented using a parameterized class of functions $\{f_\phi\}_{\phi\in\Phi}$, wherein we use $\phi_1$ and $\phi_2$ to denote parameters of the off-policy and on-policy $Q$-value functions, respectively. Thereby, the learning problem can be represented as follows,
\begin{align} \label{eq:obj}
    \min_{\theta \in \Theta} \max_{\phi_1,\phi_2 \in \Phi} \underbrace{\mathop\mathbb{E}\limits_{\boldsymbol{x} \sim p_{\boldsymbol{x}}} \left[ \mathop\mathbb{E}\limits_{\tau \sim \pi_*|\boldsymbol{y}_{0}=\boldsymbol{x}} \left[ \mathcal{U}^{\rm off}(\tau, f_{\phi_1}, \theta) \right] + \mathop\mathbb{E}\limits_{\tau' \sim	\pi_{\theta}|\boldsymbol{y}_{0}=\boldsymbol{x}} \left[ \mathcal{U}^{\rm on}(\tau', f_{\phi_2}) \right] \right]}_{\mathcal{L}(\theta,\phi_1,\phi_2):=}
\end{align} 

We denote the learning objective by $\mathcal{L}(\theta,\phi_1,\phi_2)$.
To minimize the objective of $\mathcal{L}(\theta,\phi_1,\phi_2)$  w.r.t the policy parameters $\theta$, we use a policy gradient-based approach and derive the policy gradient of $\mathcal{L}(\theta)$ w.r.t $\theta$ in Appendix \ref{apdx:policygradient}, represented as follows,
\begin{align} \label{eq:gradpolicy}
	\begin{split}
    \nabla_{\theta} \mathcal{L}(\theta,\phi_1,\phi_2) &= \mathop\mathbb{E}\limits_{\boldsymbol{x} \sim p_{\boldsymbol{x}}} \bigg[ -\mathop\mathbb{E}\limits_{\tau \sim \pi_*|\boldsymbol{y}_{0}=\boldsymbol{x}}\left[ \mathcal{G}_{\rm off}(\tau, \theta) \right] +\mathop\mathbb{E}\limits_{\tau' \sim \pi_{\theta}|\boldsymbol{y}'_{0} = \boldsymbol{x}}\left[ \mathcal{G}_{\rm on}(\tau', \theta) \right] \bigg]\\
    {\rm s.t.}\quad \mathcal{G}_{\rm off}(\tau, \theta) &= \sum_{t=1}^T  \mathop\mathbb{E}\limits_{y'\sim \pi_{\theta}(\cdot|\boldsymbol{y}_{<t})} \nabla_{\theta}\log\pi_{\theta}(y'|\boldsymbol{y}_{<t}) f_{\phi_1}(\boldsymbol{y}_{<t}, y');\\
    \mathcal{G}_{\rm on}(\tau', \theta) &= \sum_{t=1}^T \nabla_{\theta}\log\pi_{\theta}(y'_t|\boldsymbol{y}'_{<t}) \mathcal{U}^{\rm on}(\tau', f_{\phi_2})
    \end{split}
\end{align}

Besides, to maximize the objective of $\mathcal{L}(\theta,\phi_1,\phi_2)$ w.r.t. parameters of the on-policy $Q$-value function $\phi_1$ and parameters of the off-policy $Q$-value function $\phi_2$, we use a stochastic gradient ascent method and represent the corresponding gradients as follows,
\begin{align} \label{eq:gradphi1}
\nabla_{\phi_1} \mathcal{L}(\theta,\phi_1,\phi_2) &= \mathop\mathbb{E}\limits_{\tau \sim \pi_*|\boldsymbol{y}_{0}=\boldsymbol{x}} \left[ \nabla_{\phi_1} \mathcal{U}^{\rm off}(\tau, f_{\phi_1}, \theta) \right] \\ \label{eq:gradphi2}
\nabla_{\phi_2} \mathcal{L}(\theta,\phi_1,\phi_2) &= \mathop\mathbb{E}\limits_{\tau' \sim	\pi_{\theta}|\boldsymbol{y}_{0}=\boldsymbol{x}} \left[ \nabla_{\phi_2} \mathcal{U}^{\rm on}(\tau', f_{\phi_2}) \right]
\end{align}

\RestyleAlgo{ruled}
\SetKwComment{Comment}{/* }{ */}

\begin{algorithm}[t!] 
	\small
	\caption{Adversarial training procedure} \label{alg:adv-train}
	\KwIn{Dataset $\mathcal{D}_{\boldsymbol{x}\boldsymbol{y}}$ with inputs and ground-truth outputs; \\
		Teacher policy $\pi_*$, student policy $\pi_{\theta}$ with initial parameters $\theta$ pretrained on $\mathcal{D}_{\boldsymbol{x}\boldsymbol{y}}$, off-policy $Q$-value function $f_{\phi_1}$, on-policy $Q$-value function $f_{\phi_2}$;\\ 
		Learning rate $\eta$, batch size $M$, control factor $\alpha$}
	\KwOut{A optimized student policy $\pi_{\theta}$}
	\While{$\theta$ has not converged}{
		\For{$k = 0,1,2, \ldots, K$}{
			Sample an input $\boldsymbol{x}\sim \mathcal{D}_{\boldsymbol{x}}$ and generate an output trajectory $\tau_{\rm off}\sim\pi_{*}|\boldsymbol{y}_{0} = \boldsymbol{x}$ \\
			$\phi_1 \gets \phi_1 + \alpha \eta \nabla_{\phi_1} \mathcal{U}^{\rm off}(\tau_{\rm off}, f_{\phi_1}, \theta)$ $\hfill \rhd$ maximize $\mathcal{L}(\theta,\phi_1,\phi_2)$ by Eq. (\ref{eq:gradphi1}) \\
			Sample an input $\boldsymbol{x}\sim \mathcal{D}_{\boldsymbol{x}}$ and generate an output trajectory $\tau_{\rm on}\sim\pi_{\theta}|\boldsymbol{y}_{0} = \boldsymbol{x}$ \\
			$\phi_2 \gets \phi_2 + \alpha \eta \nabla_{\phi_2} \mathcal{U}^{\rm on}(\tau_{\rm on}, f_{\phi_2}) $ $\hfill \rhd$ maximize $\mathcal{L}(\theta,\phi_1,\phi_2)$ by Eq. (\ref{eq:gradphi2}) \\
		}
		Sample an input $\boldsymbol{x}\sim \mathcal{D}_{\boldsymbol{x}}$ and generate output trajectories $\tau_{\rm off}\sim\pi_{*}|\boldsymbol{y}_{0} = \boldsymbol{x}$ and $\tau_{\rm on}\sim\pi_{\theta}|\boldsymbol{y}_{0} = \boldsymbol{x}$ \\
		$\theta \gets \theta - \eta\left( \mathcal{G}_{\rm on}(\tau_{\rm on}, \theta) - \mathcal{G}_{\rm off}(\tau_{\rm off}, \theta) \right)$ $\hfill \rhd$ minimize $\mathcal{L}(\theta,\phi_1,\phi_2)$ by Eq. (\ref{eq:gradpolicy})
	}
\end{algorithm}

\noindent\textbf{Training procedure.}
The training objective is to achieve a Nash equilibrium between the parameters of student policy $\theta$ and the parameters of on-policy and off-policy $Q$-value functions $\phi_1, \phi_2$ for the optimization problem of $\min_{\phi}\max_{\phi_1,\phi_2}\mathcal{L}(\theta,\phi_1,\phi_2)$ in Eq. (\ref{eq:obj}). To this end, we use an adversarial training strategy in Algorithm \ref{alg:adv-train}, by starting from a student model fine-tuned on a dataset $\mathcal{D}_{\boldsymbol{x}\boldsymbol{y}}$. In the training algorithm, we iteratively maximize the objective w.r.t. the parameters of $Q$-value functions $\phi_1$ $\phi_2$ and simultaneously minimize the objective w.r.t. the parameters of student policy $\theta$. In each iteration of policy updating, we first perform $K$ steps of gradient ascent w.r.t. the parameters of $Q$-value functions $\phi_1, \phi_2$, where the gradients in Eq. (\ref{eq:gradphi1}) and Eq. (\ref{eq:gradphi2}) are approximately computed by sampling inputs from the dataset and sampling the trajectories from the policies. Then, the parameters of student policy $\theta$ are updated by gradient descent with the estimated gradient of Eq. (\ref{eq:gradphi1}) with sampling inputs and policies.

\section{Experiments}
We consider task-agnostic instruction-following experiments and task-specific experiments, including text summarization, machine translation, and commonsense reasoning. We compare our approach with various KD baselines, including: SFT, which fine-tunes the student model on the supervised dataset $\mathcal{D}{\boldsymbol{x}\boldsymbol{y}}$; KD \cite{hinton2015distilling}, which uses KL divergence on the supervised dataset $\mathcal{D}{\boldsymbol{x}\boldsymbol{y}}$; SeqKD \cite{kim2016sequence}, which applies SFT to the student model with teacher-generated outputs; ImitKD \cite{lin2020autoregressive}, which uses KL divergence on the student-generated outputs; MiniLLM \cite{gu2023minillm}, which uses RKL divergence with a policy gradient method; GKD \cite{agarwal2024policy}, which uses JS divergence with an on-policy method; and DistiLLM \cite{ko2024distillm}, which uses an adaptive training method for off-policy optimization of a skew KL divergence. Additionally, we focus on step-wise distance optimization for KD and compare it with a range of well-known methods, including KL divergence, RKL divergence, JS divergence, and TV distance, as discussed by Wen et al. \cite{wen2023f}. All the reported results are
the average across three random seeds.

\subsection{Task-Agnostic Distillation}
\noindent\textbf{Experimental Setup.}
We follow the previous works \cite{gu2023minillm,ko2024distillm} for the implementation of the instruction-following experiment, aiming to evaluate the distilled model’s ability to handle diverse tasks presented in the form of instructions. We construct the training data from {\tt databricks-dolly-15k} \cite{dolly2023introducing}, where we randomly select 15K samples for training and equally split 500 samples for validation and testing. We evaluate the trained model on five instruction-following datasets: DollyEval, SelfInst \cite{wang2022self}, VicunaEval \cite{chiang2023vicuna}, S-NI \cite{wang2022super}, and UnNI \cite{honovich2022unnatural}. Following the previous works \cite{gu2023minillm,ko2024distillm}, we also add the OpenWebText \cite{gokaslan} corpus, consisting of long-document plain text, for joint training with a language modeling task. This has been shown to effectively improve the performance of instruction tuning \cite{gu2023minillm}. The evaluation metrics include ROUGE-L \cite{lin2004rouge} and GPT-4 feedback with the same prompts as in \cite{ko2024distillm}. More details on experimental setup refer to Appendix \ref{apdx:hyperparams}.

\begin{table*}[t!] 
	\centering
	\scalebox{.8}{
		\begin{tabular}{lcccccccc}
			\toprule
			\multirow{2}{*}{\bf Method}& \multicolumn{2}{c}{{DollyEval}} & \multicolumn{2}{c}{{SelfInst}} & \multicolumn{2}{c}{{VicunaEval}} & S-NI & UnNI \\
			\cmidrule(lr){2-3}\cmidrule(lr){4-5}\cmidrule(lr){6-7}\cmidrule(lr){8-8}\cmidrule(lr){9-9}
			& GPT4 & R-L & GPT4 & R-L & GPT4 & R-L & R-L & R-L \\
            \midrule
			{\it OpenLLaMA2-7B (teacher)} & {\it 58.8$_{\pm 1.2}$} & {\it 32.5$_{\pm 0.4}$} & {\it 56.7$_{\pm 0.8}$} & {\it 21.6$_{\pm 0.2}$} &{\it 46.2$_{\pm 0.6}$}  & {\it 22.6$_{\pm 0.5}$} & {\it 36.3$_{\pm 0.5}$} &{\it 38.5$_{\pm 0.2}$}  \\
			\midrule
			SFT (student) & 46.8$_{\pm 0.7}$ & 26.7$_{\pm 0.6}$ & 40.8$_{\pm 1.1}$ & 16.3$_{\pm 0.7}$ & 34.8$_{\pm 0.8}$ & 17.3$_{\pm 0.2}$& 30.4$_{\pm 0.4}$ &28.6$_{\pm 0.3}$ \\
			KD \cite{hinton2015distilling} & \textcolor{gray}{43.9}$_{\pm 0.8}$ & \textcolor{gray}{22.4}$_{\pm 0.4}$ & 43.5$_{\pm 0.5}$& 17.4$_{\pm 0.5}$ & \textcolor{gray}{33.7}$_{\pm 0.3}$ & \textcolor{gray}{16.4}$_{\pm 0.2}$ & \textcolor{gray}{29.3}$_{\pm 0.6}$ & \textcolor{gray}{23.4}$_{\pm 0.3}$ \\
			SeqKD \cite{kim2016sequence} & 50.2$_{\pm 0.6}$ & \textcolor{gray}{26.2}$_{\pm 0.4}$ & 46.8$_{\pm 0.3}$ & \textcolor{gray}{15.8}$_{\pm 0.5}$ & 38.8$_{\pm 1.2}$ & 18.0$_{\pm 0.6}$ & \textcolor{gray}{29.7}$_{\pm 0.3}$ & \textcolor{gray}{27.8}$_{\pm 0.1}$ \\
			ImitKD \cite{lin2020autoregressive} & 53.7$_{\pm 1.6}$ &  \textcolor{gray}{25.3}$_{\pm 0.3}$ & 45.0$_{\pm 0.7}$ & 18.4$_{\pm 0.4}$ & 41.7$_{\pm 1.2}$ & 19.1$_{\pm 0.2}$ & 33.1$_{\pm 0.7}$ & 28.7$_{\pm 0.5}$ \\
			MiniLLM \cite{gu2023minillm} & 58.7$_{\pm 1.2}$ & 28.4$_{\pm 0.3}$ & 51.8$_{\pm 1.5}$ & 20.2$_{\pm 0.6}$ & 44.2$_{\pm 1.1}$ & \underline{20.7}$_{\pm 0.5}$ & \underline{37.4}$_{\pm 0.4}$ & 37.5$_{\pm 0.2}$ \\
			GKD \cite{agarwal2024policy} & 57.6$_{\pm 1.0}$ & 27.5$_{\pm 0.3}$ & 52.4$_{\pm 1.2}$ & \underline{20.9}$_{\pm 0.3}$ & 45.5$_{\pm 0.8}$ & 19.3$_{\pm 0.5}$ & 36.8$_{\pm 0.6}$ & 34.8$_{\pm 0.3}$ \\
			DistiLLM \cite{ko2024distillm} & \underline{59.2}$_{\pm 1.2}$ & \underline{29.5}$_{\pm 0.2}$ & \underline{53.4}$_{\pm 1.0}$ &  20.8$_{\pm 0.7}$ & \underline{46.3}$_{\pm 0.9}$ &20.4$_{\pm 0.3}$ & 37.2$_{\pm 0.1}$ & \underline{38.2}$_{\pm 0.1}$\\
			{\bf ours} & {\bf 59.8}$_{\pm 0.8}$ & {\bf 30.7}$_{\pm 0.4}$ & {\bf 54.2}$_{\pm 1.2}$ &{\bf 21.7}$_{\pm 0.5}$ & {\bf 47.8}$_{\pm 0.7}$ & {\bf 21.4}$_{\pm 0.4}$ &{\bf 38.7}$_{\pm 0.4}$ & {\bf 39.1}$_{\pm 0.3}$ \\
			\bottomrule
	\end{tabular}}
	\caption{Comparison with state-of-the-art KD methods on the instruction-following dataset using fine-tuned OpenLLaMA-7B as the teacher and fine-tuned OpenLLaMA-3B as the student. We format {\bf the best}, \underline{the second best} and \textcolor{gray}{worse than SFT} results. The results based on GPT-2 are available in Appendix \ref{apdx:gpt2}.}
	\label{tbl:instfollow}
	 \vspace{-0.5cm}
\end{table*}

\noindent\textbf{Main results.}
Table \ref{tbl:instfollow} illustrates the instruction-following performances. Compared with the SFT baseline, which indicates the student model without KD, KD and SeqKD hardly improve the performances. This indicates that using only supervised datasets or teacher-generated outputs does not benefit the KD of large language models. In contrast, utilizing the student-generated outputs with KL divergence \cite{agarwal2024policy}, RKL divergence \cite{gu2023minillm}, and JS divergence \cite{agarwal2024policy} shows effectiveness for KD in the instruction-following task. State-of-the-art methods \cite{gu2023minillm,agarwal2024policy,ko2024distillm} tend to combine the student-generated outputs with the teacher-generated output or supervised dataset to further improve the results of KD. This shows that a mixture optimization of both on-policy and off-policy objectives can effectively improve the KD performance of large language models on the instruction-following task. In particular, we use an adversarial moment-matching method and optimize both on-policy and off-policy objectives for KD, thus achieving the best results on five test datasets with both GPT4 feedback and ROUGE-L evaluations.

\subsection{Task-Specific Distillation}
\noindent\textbf{Experimental Setup.}
We evaluated the KD models on three tasks consisting of text summarization, machine translation, and reasoning. For the text summarization task, we follow Ko et al. \cite{ko2024distillm} to conduct experiments on the SAMSum \cite{gliwa2019samsum} dataset. For the machine translation tasks, we follow Ko et al. \cite{ko2024distillm} to conduct experiments on the IWSLT'17 (en-de) \cite{cettolo2017overview} dataset. For the commonsense reasoning task, we conduct experiments on the StrategyQA dataset \cite{geva2021did}. For all of the task-specific experiments, we use T5-XL \cite{raffel2020exploring} as the teacher model and T5-Large/-Base/-Small as the student models. For the machine translation experiments, we employ a multilingual pretrained model, mT5 \cite{xue2020mt5}, to build the methods. For evaluation, we use ROUGE-L \cite{lin2004rouge}, BLEU \cite{papineni2002bleu}, and accuracy as the performance metrics on SAMSum, IWSLT'17 (en-de), and StrategyQA, respectively. More details about the experimental setup refer to Appendix \ref{apdx:hyperparams}.

\begin{wrapfigure}[23]{r}{0.65\textwidth}
	 \vspace{-0.5cm}
	\centering
	\subfigure[DollyEval.]{
		\begin{minipage}[h]{0.3\textwidth}
			\centering
			\includegraphics[width=4.5cm]{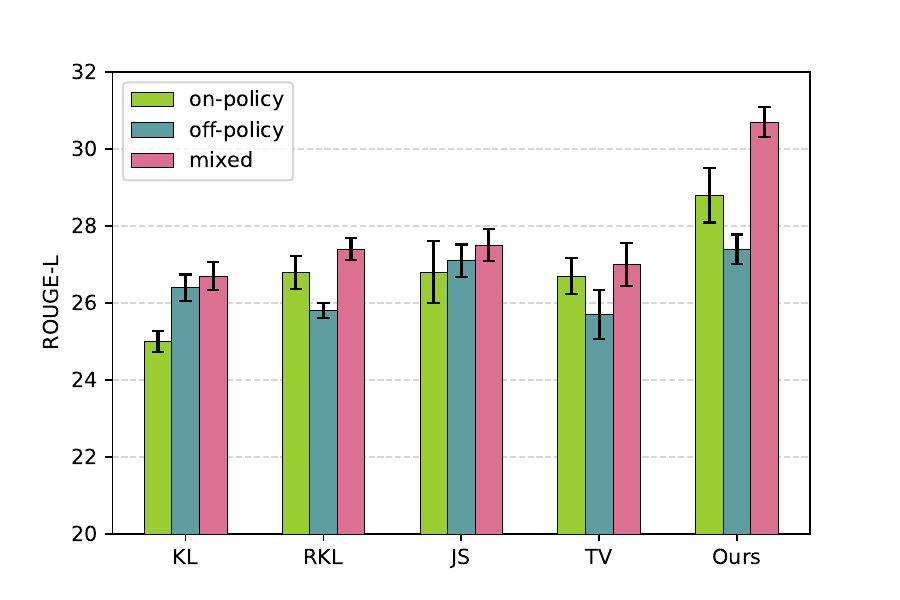}
	\end{minipage}}
	\subfigure[SAMSum.]{
		\begin{minipage}[h]{0.3\textwidth}
			\centering
			\includegraphics[width=4.5cm]{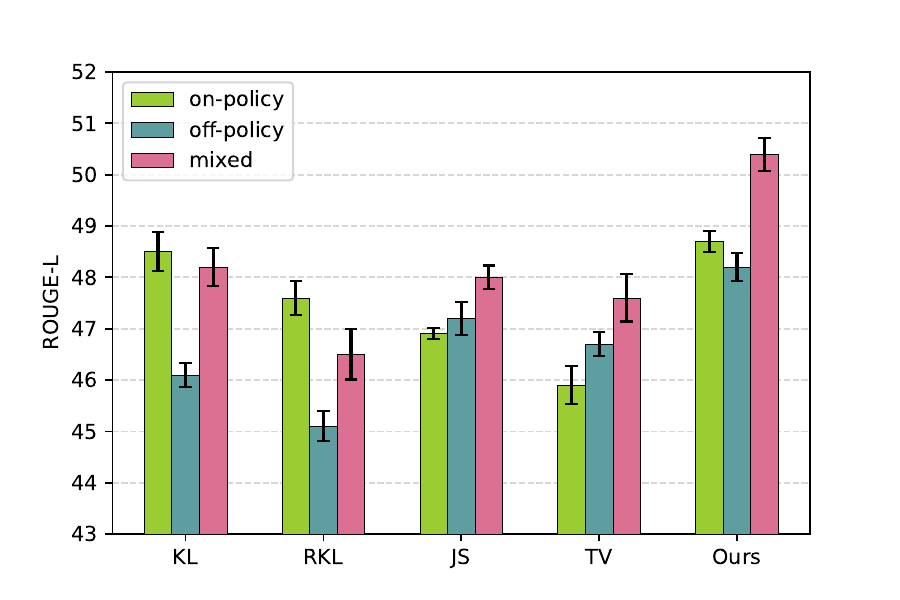}
	\end{minipage}}
	\subfigure[IWSLT'17 (en-de).]{
		\begin{minipage}[h]{0.3\textwidth}
			\centering
			\includegraphics[width=4.5cm]{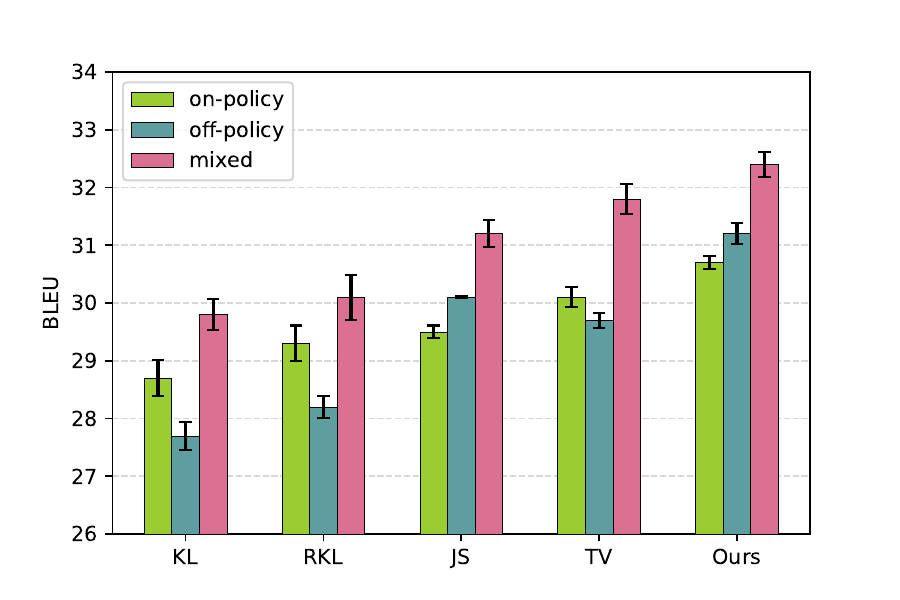}
	\end{minipage}}
	\subfigure[StrategyQA.]{
		\begin{minipage}[h]{0.3\textwidth}
			\centering
			\includegraphics[width=4.5cm]{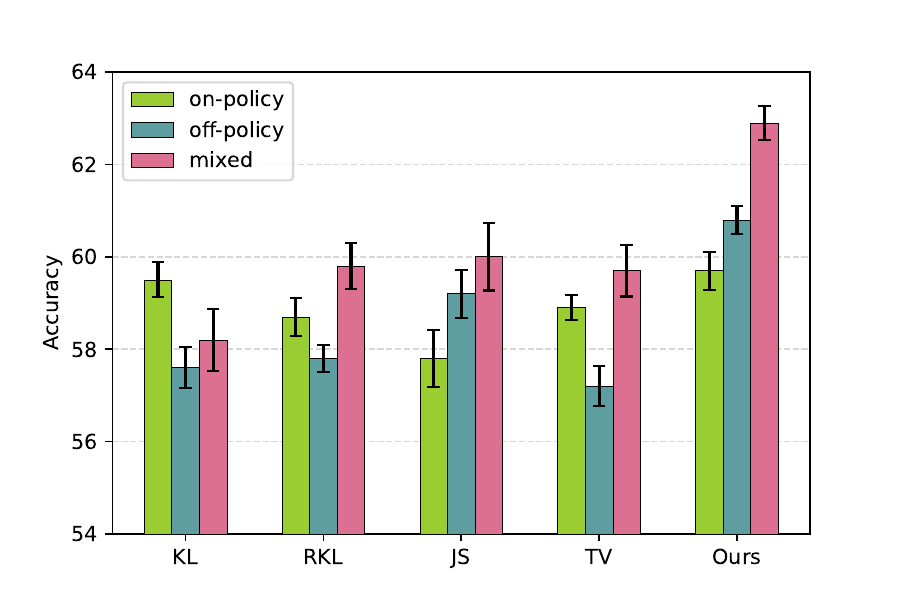}
	\end{minipage}}
	\caption{Performance of difference step-wise distribution distances.} \label{fig:performdist}
\end{wrapfigure}
\noindent\textbf{Main results.}
Table \ref{tbl:task} displays the performances on three task-specific datasets. Since the original work of MiniLLM \cite{gu2023minillm} does not consider these tasks, we thus do not make comparisons with MiniLLM. The performance trend is similar to the instruct-following results, revealing that KD of large language models for specific tasks also benefits from the combination of on-policy objectives with student-generated outputs and off-policy objectives with teacher-generated outputs or supervised datasets. Additionally, we observe that student models of different sizes all benefit from the KD methods to improve performance. Overall, our approach achieves the best results on all three task-specific datasets for student models of different sizes. This demonstrates the effectiveness of an adversarial moment-matching approach for KD of large language models on specific tasks.

\begin{table*}[t!] 
	\centering
	\scalebox{.75}{
		\begin{tabular}{lccccccccc}
			\toprule
			\multirow{2}{*}{\bf Method}& \multicolumn{3}{c}{{SAMSum}} & \multicolumn{3}{c}{{IWSLT'17 (en-de)}} & \multicolumn{3}{c}{{StrategyQA}} \\
			\cmidrule(lr){2-4}\cmidrule(lr){5-7}\cmidrule(lr){8-10}
			& T5-Small & T5-Base & T5-Large & T5-Small & T5-Base & T5-Large & T5-Small & T5-Base & T5-Large \\
			\midrule
			{\it T5-XL (teacher)} & \multicolumn{3}{c}{\it 52.5$_{\pm 0.4}$} & \multicolumn{3}{c}{\it 35.2$_{\pm 0.2}$}& \multicolumn{3}{c}{\it 64.5$_{\pm 0.8}$} \\
			\midrule
			SFT (student)  & 40.6$_{\pm 0.2}$ & 47.3$_{\pm 0.3}$ & 49.8$_{\pm 0.2}$ & 21.5$_{\pm 0.1}$ & 30.1$_{\pm 0.0}$ & 33.7$_{\pm 0.1}$ &52.4$_{\pm 0.5}$ &57.5$_{\pm 0.8}$ &60.7$_{\pm 0.8}$ \\
			KD \cite{hinton2015distilling}  & \textcolor{gray}{39.2}$_{\pm 0.4}$ & \textcolor{gray}{46.5}$_{\pm 0.3}$ & \textcolor{gray}{47.4}$_{\pm 0.3}$ & 21.7$_{\pm 0.1}$ & \textcolor{gray}{29.8}$_{\pm 0.2}$ & 31.7$_{\pm 0.1}$ & \textcolor{gray}{49.7}$_{\pm 0.3}$ & \textcolor{gray}{55.3}$_{\pm 0.1}$ & \textcolor{gray}{59.2}$_{\pm 0.5}$ \\
			SeqKD \cite{kim2016sequence}  & \textcolor{gray}{39.7}$_{\pm 0.3}$ & 47.7$_{\pm 0.5}$ & \textcolor{gray}{49.3}$_{\pm 0.4}$ & \textcolor{gray}{21.2}$_{\pm 0.3}$ &\textcolor{gray}{29.2}$_{\pm 0.2}$ & 32.9$_{\pm 0.5}$ & \textcolor{gray}{50.6}$_{\pm 0.7}$ & 57.5$_{\pm 1.1}$ & 61.5$_{\pm 0.8}$ \\
			ImitKD \cite{lin2020autoregressive}  & 41.8$_{\pm 0.3}$ & 48.6$_{\pm 0.7}$ & 51.2$_{\pm 0.5}$ & 22.2$_{\pm 0.3}$ & \textcolor{gray}{28.7}$_{\pm 0.6}$ &34.1$_{\pm 0.2}$ & 53.8$_{\pm 0.8}$ & 59.7$_{\pm 0.5}$ & 61.7$_{\pm 0.6}$  \\
			GKD \cite{agarwal2024policy}  & 42.1$_{\pm 0.3}$ & 48.2$_{\pm 0.5}$ & 51.7$_{\pm 0.4}$ &\underline{22.7}$_{\pm 0.2}$ & \underline{31.2}$_{\pm 0.1}$ & 34.7$_{\pm 0.2}$ & 55.6$_{\pm 0.4}$ & 60.3$_{\pm 0.5}$ & \underline{63.6}$_{\pm 0.3}$ \\
			DistiLLM \cite{ko2024distillm}  & \underline{42.6}$_{\pm 0.2}$ & \underline{49.4}$_{\pm 0.6}$ & \underline{52.1}$_{\pm 0.4}$ &22.5$_{\pm 0.1}$ & 30.8$_{\pm 0.2}$ & \underline{35.5}$_{\pm 0.1}$ & \underline{56.3}$_{\pm 0.3}$ &\underline{61.2}$_{\pm 0.7}$ & 62.8$_{\pm 0.2}$ \\
			{\bf ours}  & {\bf 43.7}$_{\pm 0.4}$ &{\bf 50.4}$_{\pm 0.3}$ & {\bf 52.7}$_{\pm 0.3}$ &{\bf 23.7}$_{\pm 0.1}$ &{\bf 32.4}$_{\pm 0.3}$ & {\bf 36.0}$_{\pm 0.2}$ & {\bf 58.2}$_{\pm 0.4}$ &{\bf 62.9}$_{\pm 0.3}$ & {\bf 65.3}$_{\pm 0.7}$ \\
			\bottomrule
	\end{tabular}}
	\caption{Comparison with the state-of-the-art KD methods on text summarization, machine	translation and commonsense reasoning datasets. We report the ROUGE-L, BLEU and accuracy for SAMSum, IWSLT'17 (en-de) and StrategyQA, respectively. We format {\bf the best}, \underline{the second best} and \textcolor{gray}{worse than SFT} results.}
	\label{tbl:task}
	 \vspace{-0.5cm}
\end{table*}

\subsection{Analysis on Step-Wise Distance Optimization}
\noindent\textbf{Comparison with distribution matching.}
We make comparisons with different step-wise distribution distances with a uniform formulation of Definition \ref{def:kdmetric}, considering the on-policy, off-policy objectives as well as the joint form. Results on four tasks are shown in Figure \ref{fig:performdist}, more instruct-following results are available in Appendix \ref{apdx:performdist}. Compared with the KL divergence, RKL divergence, JS divergence and total variation distance, the proposed moment-matching distance achieves the best results under both the on-policy and off-policy training objectives, which shows that the proposed moment-matching approach is effective for KD of large language models. Besides, we observe that using a joint objective of both on-policy and off-policy can further significantly improve the performances. This shows that both on-policy and off-policy moment-matching objectives contribute to the minimization of the imitation gap and can thus benefit the KD of large language models.

\begin{figure}[h]
	\vspace{-0.5cm}
	\centering
	\subfigure[Training loss and $d^{\rm on}_{\rm MM}$, $d^{\rm off}_{\rm MM}$ against training step.]{
		\begin{minipage}[h]{0.3\textwidth}
			\centering
			\includegraphics[width=4.5cm]{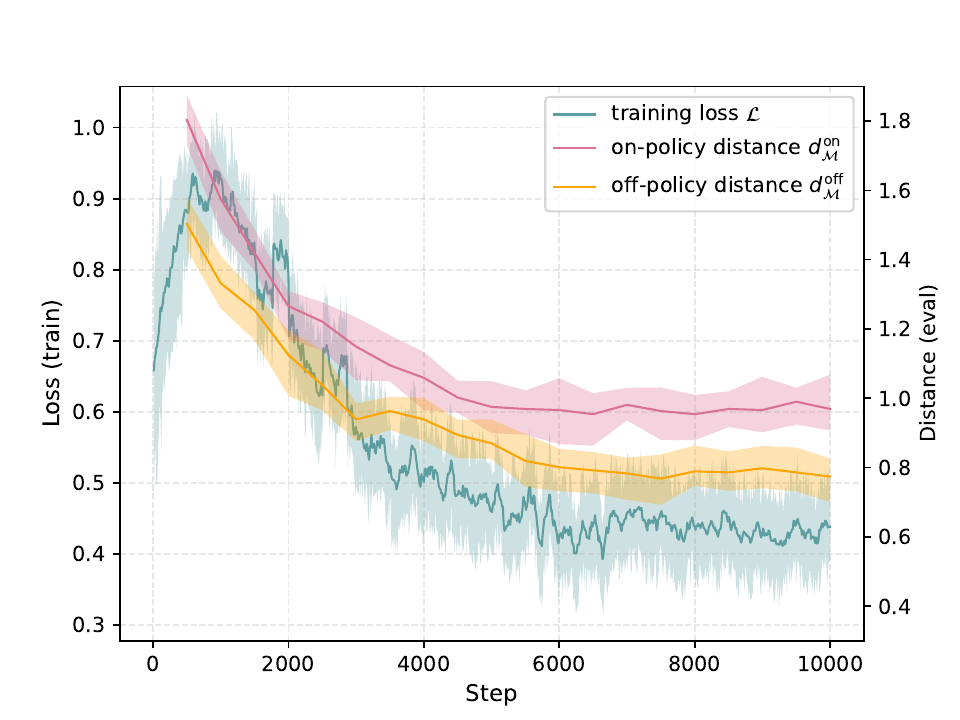}
	\end{minipage}}
    \hspace{0.1in}
	\subfigure[On-policy moment-matching distance $d^{\rm on}_{\rm MM}$ on the test sets.]{
		\begin{minipage}[h]{0.3\textwidth}
			\centering
			\includegraphics[width=4.5cm]{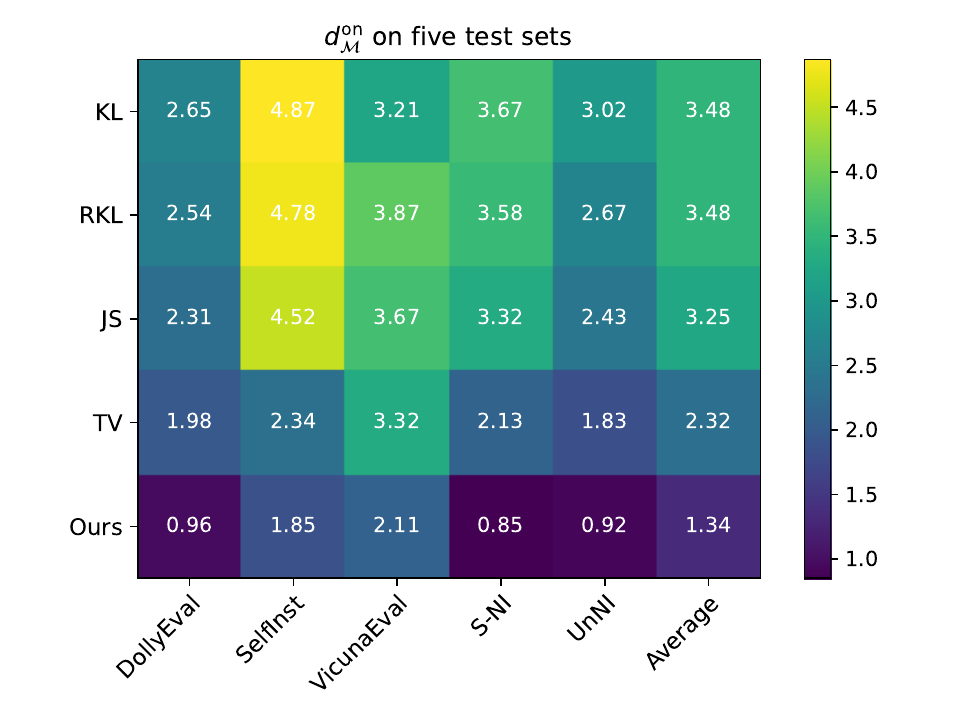}
	\end{minipage}}
\hspace{0.05in}
	\subfigure[Off-policy moment-matching distance $d^{\rm on}_{\rm MM}$ on the test sets.]{
		\begin{minipage}[h]{0.3\textwidth}
			\centering
			\includegraphics[width=4.5cm]{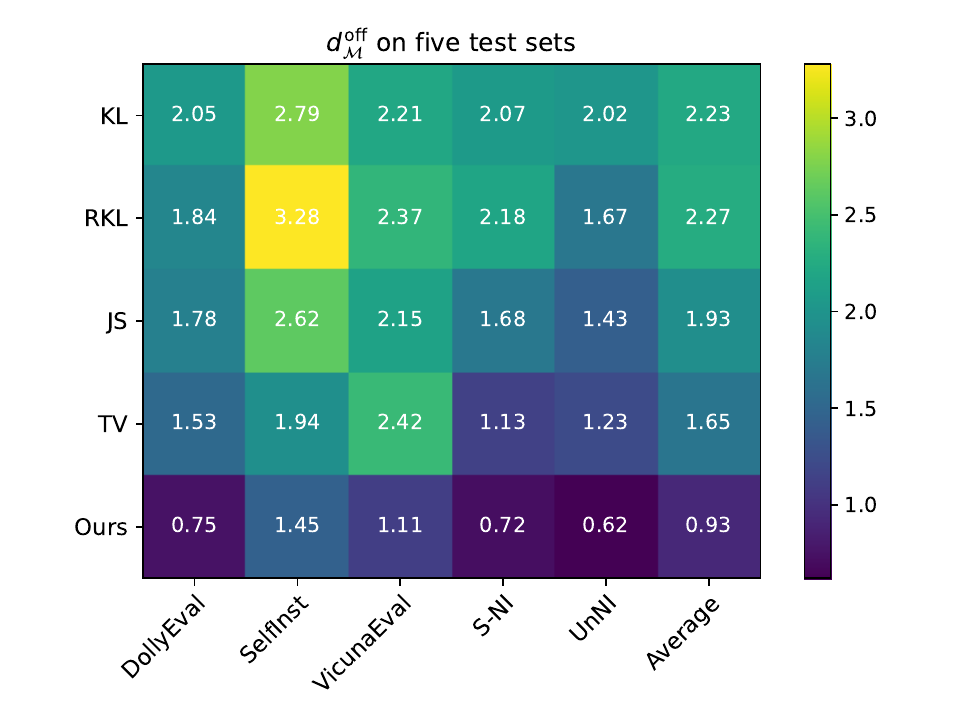}
	\end{minipage}} 
	\caption{Adversarial training procedure for optimizing the on-policy and off-policy moment-matching distances $d^{\rm on}_{\rm MM}$, $d^{\rm off}_{\rm MM}$ on the instruction-following dataset.} \label{fig:dist}
\end{figure}

\noindent\textbf{Adversarial training procedure.}
We present the training loss and moment-matching distance against the adversarial training steps. As depicted in Figure \ref{fig:dist} (a), the training loss initially increases within the first 0-1,000 steps, indicating that initially, the $Q$-value functions are stronger than the policy in maximizing the loss function $\mathcal{L}(\theta,\phi_1,\phi_2)$ in Eq. (\ref{eq:obj}). Concurrently, the policy gradient method contributes to minimizing the training loss, which eventually converges to a much lower stable value. Additionally, both the on-policy and off-policy moment-matching distances $d^{\rm on}{\rm MM}$ and $d^{\rm off}{\rm MM}$ decrease and eventually reach a low value with only minor fluctuations.
For more results and details on experimental setups, please refer to Appendix \ref{apdx:loss}.

\noindent\textbf{Moment-matching distance optimization.}
We further illustrate the on-policy moment-matching distance $d^{\rm on}{\rm MM}$ and the off-policy moment-matching distance $d^{\rm off}{\rm MM}$ (defined in Definition \ref{def:dismm}) optimized by different step-wise distances in Figure \ref{fig:dist} (b) and (c), respectively. Interestingly, we observe that the total variation (TV) distance obtains the second-best results on average for both on-policy and off-policy distances. This finding suggests a similarity between the formulations of TV distance and moment-matching distances to some extent, as supported by the theoretical result of Corollary \ref{cor1}. Across all instruction-following test sets, our approach effectively optimizes both on-policy and off-policy moment-matching distances more than other step-wise distribution distances used in KD, including KL divergence, RKL divergence, JS divergence, and TV distance. 
This observation also underscores the effectiveness of our policy gradient methods. Extensive results on the task-specific datasets are available in Appendix \ref{apdx:dist}.

\section{Conclusion}
In this work, we investigated a moment-matching approach for knowledge distillation of large language models. Specifically, we formulated knowledge distillation from a perspective of imitation learning and derived both on-policy and off-policy bounds for the imitation gap between the teacher model and student model via moment-matching distance. Additionally, we proposed an adversarial training algorithm to simultaneously estimate and minimize the joint objective of on-policy and off-policy moment-matching distances. In experiments, we evaluated the proposed algorithm on four instruction-following datasets and three task-specific datasets, comparing it with a range of state-of-the-art KD methods as well as four well-studied step-wise distribution distances for KD of auto-regressive models. Results demonstrate that our approach can effectively leverage the policy gradient method to optimize the moment-matching distance and achieve the best results across all datasets.

\noindent\textbf{Limitations and future work.} 
The proposed adversarial training algorithm requires additional computational steps for the inner-loop gradient ascent, which may result in increased time complexity. Moreover, the proposed approach necessitates auxiliary networks to build the $Q$-value functions, which may incur additional memory costs. Therefore, in future work, we aim to enhance the time and memory efficiency of our approach.

\bibliographystyle{plain}
\bibliography{neurips_2024}

\begin{thebibliography}{10}

\bibitem{achiam2023gpt}
Josh Achiam, Steven Adler, Sandhini Agarwal, Lama Ahmad, Ilge Akkaya,
  Florencia~Leoni Aleman, Diogo Almeida, Janko Altenschmidt, Sam Altman,
  Shyamal Anadkat, et~al.
\newblock Gpt-4 technical report.
\newblock {\em arXiv preprint arXiv:2303.08774}, 2023.

\bibitem{agarwal2024policy}
Rishabh Agarwal, Nino Vieillard, Yongchao Zhou, Piotr Stanczyk, Sabela~Ramos
  Garea, Matthieu Geist, and Olivier Bachem.
\newblock On-policy distillation of language models: Learning from
  self-generated mistakes.
\newblock In {\em The Twelfth International Conference on Learning
  Representations}, 2024.

\bibitem{bai2022constitutional}
Yuntao Bai, Saurav Kadavath, Sandipan Kundu, Amanda Askell, Jackson Kernion,
  Andy Jones, Anna Chen, Anna Goldie, Azalia Mirhoseini, Cameron McKinnon,
  et~al.
\newblock Constitutional ai: Harmlessness from ai feedback.
\newblock {\em arXiv preprint arXiv:2212.08073}, 2022.

\bibitem{cettolo2017overview}
Mauro Cettolo, Marcello Federico, Luisa Bentivogli, Jan Niehues, Sebastian
  St{\"u}ker, Katsuitho Sudoh, Koichiro Yoshino, and Christian Federmann.
\newblock Overview of the iwslt 2017 evaluation campaign.
\newblock In {\em Proceedings of the 14th International Workshop on Spoken
  Language Translation}, pages 2--14, 2017.

\bibitem{chiang2023vicuna}
Wei-Lin Chiang, Zhuohan Li, Zi~Lin, Ying Sheng, Zhanghao Wu, Hao Zhang, Lianmin
  Zheng, Siyuan Zhuang, Yonghao Zhuang, Joseph~E Gonzalez, et~al.
\newblock Vicuna: An open-source chatbot impressing gpt-4 with 90\%* chatgpt
  quality.
\newblock {\em See https://vicuna. lmsys. org (accessed 14 April 2023)},
  2(3):6, 2023.

\bibitem{ciosek2021imitation}
Kamil Ciosek.
\newblock Imitation learning by reinforcement learning.
\newblock In {\em International Conference on Learning Representations}, 2021.

\bibitem{cui2023ultrafeedback}
Ganqu Cui, Lifan Yuan, Ning Ding, Guanming Yao, Wei Zhu, Yuan Ni, Guotong Xie,
  Zhiyuan Liu, and Maosong Sun.
\newblock Ultrafeedback: Boosting language models with high-quality feedback.
\newblock {\em arXiv preprint arXiv:2310.01377}, 2023.

\bibitem{dolly2023introducing}
Free Dolly.
\newblock Introducing the world’s first truly open instruction-tuned llm.
  databricks. com, 2023.

\bibitem{geng2023openllama}
Xinyang Geng and Hao Liu.
\newblock Openllama: An open reproduction of llama.
\newblock {\em URL: https://github. com/openlm-research/open\_llama}, 2023.

\bibitem{geva2021did}
Mor Geva, Daniel Khashabi, Elad Segal, Tushar Khot, Dan Roth, and Jonathan
  Berant.
\newblock Did aristotle use a laptop? a question answering benchmark with
  implicit reasoning strategies.
\newblock {\em Transactions of the Association for Computational Linguistics},
  9:346--361, 2021.

\bibitem{gliwa2019samsum}
Bogdan Gliwa, Iwona Mochol, Maciej Biesek, and Aleksander Wawer.
\newblock Samsum corpus: A human-annotated dialogue dataset for abstractive
  summarization.
\newblock {\em arXiv preprint arXiv:1911.12237}, 2019.

\bibitem{gokaslan}
Aaron Gokaslan, Vanya Aaron, Ellie Pavlick, and Stefanie Tellex.
\newblock Openwebtext corpus.
\newblock 2019.

\bibitem{gu2023minillm}
Yuxian Gu, Li~Dong, Furu Wei, and Minlie Huang.
\newblock Minillm: Knowledge distillation of large language models.
\newblock In {\em The Twelfth International Conference on Learning
  Representations}, 2024.

\bibitem{hao2022teacher}
Yongchang Hao, Yuxin Liu, and Lili Mou.
\newblock Teacher forcing recovers reward functions for text generation.
\newblock {\em Advances in Neural Information Processing Systems},
  35:12594--12607, 2022.

\bibitem{hinton2015distilling}
Geoffrey Hinton, Oriol Vinyals, and Jeff Dean.
\newblock Distilling the knowledge in a neural network.
\newblock {\em arXiv preprint arXiv:1503.02531}, 2015.

\bibitem{honovich2022unnatural}
Or~Honovich, Thomas Scialom, Omer Levy, and Timo Schick.
\newblock Unnatural instructions: Tuning language models with (almost) no human
  labor.
\newblock {\em arXiv preprint arXiv:2212.09689}, 2022.

\bibitem{kaplan2020scaling}
Jared Kaplan, Sam McCandlish, Tom Henighan, Tom~B Brown, Benjamin Chess, Rewon
  Child, Scott Gray, Alec Radford, Jeffrey Wu, and Dario Amodei.
\newblock Scaling laws for neural language models.
\newblock {\em arXiv preprint arXiv:2001.08361}, 2020.

\bibitem{kim2016sequence}
Yoon Kim and Alexander~M Rush.
\newblock Sequence-level knowledge distillation.
\newblock In {\em Proceedings of the 2016 Conference on Empirical Methods in
  Natural Language Processing}. Association for Computational Linguistics,
  2016.

\bibitem{ko2024distillm}
Jongwoo Ko, Sungnyun Kim, Tianyi Chen, and Se-Young Yun.
\newblock Distillm: Towards streamlined distillation for large language models.
\newblock {\em arXiv preprint arXiv:2402.03898}, 2024.

\bibitem{liang2023less}
Chen Liang, Simiao Zuo, Qingru Zhang, Pengcheng He, Weizhu Chen, and Tuo Zhao.
\newblock Less is more: Task-aware layer-wise distillation for language model
  compression.
\newblock In {\em International Conference on Machine Learning}, pages
  20852--20867. PMLR, 2023.

\bibitem{lin2020autoregressive}
Alexander Lin, Jeremy Wohlwend, Howard Chen, and Tao Lei.
\newblock Autoregressive knowledge distillation through imitation learning.
\newblock In {\em Proceedings of the 2020 Conference on Empirical Methods in
  Natural Language Processing (EMNLP)}, pages 6121--6133, 2020.

\bibitem{lin2004rouge}
Chin-Yew Lin.
\newblock Rouge: A package for automatic evaluation of summaries.
\newblock In {\em Text summarization branches out}, pages 74--81, 2004.

\bibitem{pang2021text}
Richard~Yuanzhe Pang and He~He.
\newblock Text generation by learning from demonstrations.
\newblock In {\em 9th International Conference on Learning Representations,
  ICLR 2021}, 2021.

\bibitem{papineni2002bleu}
Kishore Papineni, Salim Roukos, Todd Ward, and Wei-Jing Zhu.
\newblock Bleu: a method for automatic evaluation of machine translation.
\newblock In {\em Proceedings of the 40th annual meeting of the Association for
  Computational Linguistics}, pages 311--318, 2002.

\bibitem{radford2019language}
Alec Radford, Jeffrey Wu, Rewon Child, David Luan, Dario Amodei, Ilya
  Sutskever, et~al.
\newblock Language models are unsupervised multitask learners.
\newblock {\em OpenAI blog}, 1(8):9, 2019.

\bibitem{raffel2020exploring}
Colin Raffel, Noam Shazeer, Adam Roberts, Katherine Lee, Sharan Narang, Michael
  Matena, Yanqi Zhou, Wei Li, and Peter~J Liu.
\newblock Exploring the limits of transfer learning with a unified text-to-text
  transformer.
\newblock {\em Journal of machine learning research}, 21(140):1--67, 2020.

\bibitem{sanh2019distilbert}
Victor Sanh, Lysandre Debut, Julien Chaumond, and Thomas Wolf.
\newblock Distilbert, a distilled version of bert: smaller, faster, cheaper and
  lighter.
\newblock {\em arXiv preprint arXiv:1910.01108}, 2019.

\bibitem{sriperumbudur2009integral}
Bharath~K Sriperumbudur, Kenji Fukumizu, Arthur Gretton, Bernhard
  Sch{\"o}lkopf, and Gert~RG Lanckriet.
\newblock On integral probability metrics,$\backslash$phi-divergences and
  binary classification.
\newblock {\em arXiv preprint arXiv:0901.2698}, 2009.

\bibitem{swamy2022sequence}
Gokul Swamy, Sanjiban Choudhury, J~Bagnell, and Steven~Z Wu.
\newblock Sequence model imitation learning with unobserved contexts.
\newblock {\em Advances in Neural Information Processing Systems},
  35:17665--17676, 2022.

\bibitem{swamy2021moments}
Gokul Swamy, Sanjiban Choudhury, J~Andrew Bagnell, and Steven Wu.
\newblock Of moments and matching: A game-theoretic framework for closing the
  imitation gap.
\newblock In {\em International Conference on Machine Learning}, pages
  10022--10032. PMLR, 2021.

\bibitem{taori2023stanford}
Rohan Taori, Ishaan Gulrajani, Tianyi Zhang, Yann Dubois, Xuechen Li, Carlos
  Guestrin, Percy Liang, and Tatsunori~B Hashimoto.
\newblock Stanford alpaca: An instruction-following llama model, 2023.

\bibitem{touvron2023llama}
Hugo Touvron, Thibaut Lavril, Gautier Izacard, Xavier Martinet, Marie-Anne
  Lachaux, Timoth{\'e}e Lacroix, Baptiste Rozi{\`e}re, Naman Goyal, Eric
  Hambro, Faisal Azhar, et~al.
\newblock Llama: Open and efficient foundation language models.
\newblock {\em arXiv preprint arXiv:2302.13971}, 2023.

\bibitem{wang2022self}
Yizhong Wang, Yeganeh Kordi, Swaroop Mishra, Alisa Liu, Noah~A Smith, Daniel
  Khashabi, and Hannaneh Hajishirzi.
\newblock Self-instruct: Aligning language models with self-generated
  instructions.
\newblock {\em arXiv preprint arXiv:2212.10560}, 2022.

\bibitem{wang2022super}
Yizhong Wang, Swaroop Mishra, Pegah Alipoormolabashi, Yeganeh Kordi, Amirreza
  Mirzaei, Anjana Arunkumar, Arjun Ashok, Arut~Selvan Dhanasekaran, Atharva
  Naik, David Stap, et~al.
\newblock Super-naturalinstructions: Generalization via declarative
  instructions on 1600+ nlp tasks.
\newblock {\em arXiv preprint arXiv:2204.07705}, 2022.

\bibitem{wen2023f}
Yuqiao Wen, Zichao Li, Wenyu Du, and Lili Mou.
\newblock f-divergence minimization for sequence-level knowledge distillation.
\newblock In {\em Proceedings of the 61st Annual Meeting of the Association for
  Computational Linguistics (Volume 1: Long Papers)}, pages 10817--10834, 2023.

\bibitem{wu2021textgail}
Qingyang Wu, Lei Li, and Zhou Yu.
\newblock Textgail: Generative adversarial imitation learning for text
  generation.
\newblock In {\em Proceedings of the AAAI Conference on Artificial
  Intelligence}, volume~35, pages 14067--14075, 2021.

\bibitem{wu2024rethinking}
Taiqiang Wu, Chaofan Tao, Jiahao Wang, Zhe Zhao, and Ngai Wong.
\newblock Rethinking kullback-leibler divergence in knowledge distillation for
  large language models.
\newblock {\em arXiv preprint arXiv:2404.02657}, 2024.

\bibitem{xu2024survey}
Xiaohan Xu, Ming Li, Chongyang Tao, Tao Shen, Reynold Cheng, Jinyang Li, Can
  Xu, Dacheng Tao, and Tianyi Zhou.
\newblock A survey on knowledge distillation of large language models.
\newblock {\em arXiv preprint arXiv:2402.13116}, 2024.

\bibitem{xue2020mt5}
Linting Xue, Noah Constant, Adam Roberts, Mihir Kale, Rami Al-Rfou, Aditya
  Siddhant, Aditya Barua, and Colin Raffel.
\newblock mt5: A massively multilingual pre-trained text-to-text transformer.
\newblock {\em arXiv preprint arXiv:2010.11934}, 2020.

\bibitem{yu2017seqgan}
Lantao Yu, Weinan Zhang, Jun Wang, and Yong Yu.
\newblock Seqgan: Sequence generative adversarial nets with policy gradient.
\newblock In {\em Proceedings of the AAAI conference on artificial
  intelligence}, volume~31, 2017.

\end{thebibliography}

\newpage

\newpage
\appendix

\section{Proofs} \label{proofs}
\subsection{Proof of Proposition \ref{pro1}} \label{apdx:pro1}
\begin{proof}
Similar to the proof of Performance Difference Lemma (PDL), we have
\begin{align*}
	&J(\pi_*) - J(\pi_{\theta}) \\
	=& \mathop\mathbb{E}\limits_{\boldsymbol{x} \sim p_{\boldsymbol{x}} \atop \tau \sim \pi_*|\boldsymbol{y}_{0}=\boldsymbol{x}}\left[ \sum_{t=1}^T r(\boldsymbol{y}_{<t}, y_t) \right] - \mathop\mathbb{E}\limits_{\boldsymbol{x} \sim p_{\boldsymbol{x}}} \left[ V^{\pi_{\theta}} (\boldsymbol{x}) \right] \\
	=& \mathop\mathbb{E}\limits_{\boldsymbol{x} \sim p_{\boldsymbol{x}} \atop \tau \sim \pi_*|\boldsymbol{y}_{0}=\boldsymbol{x}}\left[ \sum_{t=1}^T \left( r(\boldsymbol{y}_{<t}, y_t) + V^{\pi_{\theta}}(\boldsymbol{y}_{<t}) - V^{\pi_{\theta}}(\boldsymbol{y}_{<t}) \right) \right] - \mathop\mathbb{E}\limits_{\boldsymbol{x} \sim p_{\boldsymbol{x}}} \left[ V^{\pi_{\theta}} (\boldsymbol{x}) \right] \\
	=& \mathop\mathbb{E}\limits_{\boldsymbol{x} \sim p_{\boldsymbol{x}} \atop \tau \sim \pi_*|\boldsymbol{y}_{0}=\boldsymbol{x}} \left[ \sum_{t=1}^T \left( r(\boldsymbol{y}_{<t}, y_t) + V^{\pi_{\theta}}(\boldsymbol{y}_{< t+1}) - V^{\pi_{\theta}}(\boldsymbol{y}_{<t})  \right) \right] \\
	\overset{(i)}{=}& \mathop\mathbb{E}\limits_{\boldsymbol{x} \sim p_{\boldsymbol{x}} \atop \tau \sim \pi_*|\boldsymbol{y}_{0}=\boldsymbol{x}} \left[ \sum_{t=1}^T \left( Q^{\pi_{\theta}}(\boldsymbol{y}_{<t}, y_t) -  V^{\pi_{\theta}}(\boldsymbol{y}_{<t}) \right)  \right]\\
	=& \mathop\mathbb{E}\limits_{\boldsymbol{x} \sim p_{\boldsymbol{x}} \atop \tau \sim \pi_*|\boldsymbol{y}_{0}=\boldsymbol{x}} \left[ \sum_{t=1}^T \left( Q^{\pi_{\theta}}(\boldsymbol{y}_{<t}, y_t) - \mathop\mathbb{E}\limits_{y \sim \pi_{\theta}(\cdot| \boldsymbol{y}_{<t})} \left[ Q^{\pi_{\theta}}(\boldsymbol{y}_{<t},y)\right] \right) \right] \\
	\leq& \sup_{f \in \mathcal{F}_{Q}} \mathop\mathbb{E}\limits_{\boldsymbol{x} \sim p_{\boldsymbol{x}} \atop \tau \sim \pi_*|\boldsymbol{y}_{0}=\boldsymbol{x}} \left[ \sum_{t=1}^T \left( f(\boldsymbol{y}_{<t}, y_t) - \mathop\mathbb{E}\limits_{y \sim \pi_{\theta}(\cdot| \boldsymbol{y}_{<t})} \left[ f(\boldsymbol{y}_{<t},y)\right] \right) \right],
\end{align*}
where $(i)$ follows from Bellman equation.
\end{proof}

\subsection{Proof of Proposition \ref{pro2}} \label{apdx:pro2}
\begin{proof}
	Similar to the proof of Proposition \ref{pro1}, we have
	\begin{align*}
		&J(\pi_*) - J(\pi_{\theta}) \\
		=& - \mathop\mathbb{E}\limits_{\boldsymbol{x} \sim p_{\boldsymbol{x}} \atop \tau \sim	\pi_{\theta}|\boldsymbol{y}_{0}=\boldsymbol{x}}\left[ \sum_{t=1}^T r(\boldsymbol{y}_{<t}, y_t) \right] + \mathop\mathbb{E}\limits_{\boldsymbol{x} \sim p_{\boldsymbol{x}}} \left[ V^{\pi_*} (\boldsymbol{x}) \right] \\
		=& \mathop\mathbb{E}\limits_{\boldsymbol{x} \sim p_{\boldsymbol{x}} \atop \tau \sim	\pi_{\theta}|\boldsymbol{y}_{0}=\boldsymbol{x}}\left[ \sum_{t=1}^T \left( V^{\pi_*}(\boldsymbol{y}_{<t}) - \left( r(\boldsymbol{y}_{<t}, y_t) + V^{\pi_*}(\boldsymbol{y}_{<t}) \right) \right) \right] + \mathop\mathbb{E}\limits_{\boldsymbol{x} \sim p_{\boldsymbol{x}}} \left[ V^{\pi_*} (\boldsymbol{x}) \right] \\
		=& \mathop\mathbb{E}\limits_{\boldsymbol{x} \sim p_{\boldsymbol{x}} \atop \tau \sim	\pi_{\theta}|\boldsymbol{y}_{0}=\boldsymbol{x}}\left[ \sum_{t=1}^T \left( V^{\pi_*}(\boldsymbol{y}_{<t}) - \left( r(\boldsymbol{y}_{<t}, y_t) + V^{\pi_*}(\boldsymbol{y}_{< t+1}) \right) \right) \right] \\
		=& \mathop\mathbb{E}\limits_{\boldsymbol{x} \sim p_{\boldsymbol{x}} \atop \tau \sim	\pi_{\theta}|\boldsymbol{y}_{0}=\boldsymbol{x}}\left[ \sum_{t=1}^T \left( V^{\pi_*}(\boldsymbol{y}_{<t}) - Q^{\pi_*}(\boldsymbol{y}_{<t}, y_t) \right) \right] \\
		=& \mathop\mathbb{E}\limits_{\boldsymbol{x} \sim p_{\boldsymbol{x}} \atop \tau \sim	\pi_{\theta}|\boldsymbol{y}_{0}=\boldsymbol{x}}\left[ \sum_{t=1}^T \left( \mathop\mathbb{E}\limits_{y\sim \pi_*(\cdot|\boldsymbol{y}_{<t})} \left[Q^{\pi_*}(\boldsymbol{y}_{<t},y) \right] - Q^{\pi_*}(\boldsymbol{y}_{< t+1}, y_t) \right) \right] \\
		\leq& \sup_{f\in {\mathcal{F}_{Q_*}}} \mathop\mathbb{E}\limits_{\boldsymbol{x} \sim p_{\boldsymbol{x}} \atop \tau \sim	\pi_{\theta}|\boldsymbol{y}_{0}=\boldsymbol{x}}\left[ \sum_{t=1}^T \left( \mathop\mathbb{E}\limits_{y\sim \pi_*(\cdot|\boldsymbol{y}_{<t})} \left[f(\boldsymbol{y}_{<t},y) \right] - f(\boldsymbol{y}_{<t}, y_t) \right)  \right]
	\end{align*}
\end{proof}

\subsection{Existing Step-Wise Distribution Distance for KD} \label{apdx:kdmetric}
\begin{definition}[\bf Step-wise distribution distances for distillation] \label{def:stepdist}
	Following Wen et al. \cite{wen2023f}, we define four groups of well-studied probability distribution distances as follows,
	\begin{itemize}
		\item {\bf Total variation (TV) distance.} \cite{wen2023f} The off-policy and on-policy step-wise TV distance can be defined as follows,
		\begin{align}
		    d_{\rm TV}^{\rm off}(\pi_{\theta},\pi_{*}) :=& \mathop\mathbb{E}\limits_{\boldsymbol{x} \sim p_{\boldsymbol{x}} \atop \tau \sim \pi_*|\boldsymbol{y}_{0}=\boldsymbol{x}}\!\!  \left[ \sum_{t=1}^T \sum_{y\in \mathcal{V}} \left\vert \pi_*(y|\boldsymbol{y}_{<t}) - \pi_{\theta}(y|\boldsymbol{y}_{<t}) \right\vert \right]; \\
			d_{\rm TV}^{\rm on}(\pi_{\theta},\pi_{*}) :=& \mathop\mathbb{E}\limits_{\boldsymbol{x} \sim p_{\boldsymbol{x}} \atop \tau \sim \pi_{\theta}|\boldsymbol{y}_{0}=\boldsymbol{x}}\!\!  \left[ \sum_{t=1}^T \sum_{y\in \mathcal{V}} \left\vert \pi_*(y|\boldsymbol{y}_{<t}) - \pi_{\theta}(y|\boldsymbol{y}_{<t}) \right\vert \right]
		\end{align}
		\item {\bf Kullback–Leibler (KL) divergence} \cite{wen2023f} The off-policy and on-policy step-wise KL divergence between the teacher policy $\pi_{*}$ and the student policy $\pi_{\theta}$ can be defined as follows,
	    \begin{align}
	        d_{\rm KL}^{\rm off}(\pi_{*},\pi_{\theta}) :=& \mathop\mathbb{E}\limits_{\boldsymbol{x} \sim p_{\boldsymbol{x}} \atop \tau \sim \pi_*|\boldsymbol{y}_{0}=\boldsymbol{x}} \left[ \sum_{t=1}^T \sum_{y\in \mathcal{V}} \pi_{*}(y|\boldsymbol{y}_{<t}) \log \frac{\pi_{*}(y|\boldsymbol{y}_{<t})}{\pi_{\theta}(y|\boldsymbol{y}_{<t})} \right];\\
	    	d_{\rm KL}^{\rm on}(\pi_{*},\pi_{\theta}) :=&  \mathop\mathbb{E}\limits_{\boldsymbol{x} \sim p_{\boldsymbol{x}} \atop \tau \sim \pi_{\theta}|\boldsymbol{y}_{0}=\boldsymbol{x}} \left[ \sum_{t=1}^T \sum_{y\in \mathcal{V}} \pi_{*}(y|\boldsymbol{y}_{<t}) \log \frac{\pi_{*}(y|\boldsymbol{y}_{<t})}{\pi_{\theta}(y|\boldsymbol{y}_{<t})} \right]
        \end{align}
	    \item {\bf Reverse Kullback–Leibler (RKL) divergence} \cite{wen2023f} The off-policy and on-policy step-wise RKL divergence between the teacher policy $\pi_{*}$ and the student policy $\pi_{\theta}$ can be defined as follows,
	    \begin{align}
	        d_{\rm RKL}^{\rm off}(\pi_{\theta},\pi_{*}) :=& \mathop\mathbb{E}\limits_{\boldsymbol{x} \sim p_{\boldsymbol{x}} \atop \tau \sim \pi_*|\boldsymbol{y}_{0}=\boldsymbol{x}} \left[ \sum_{t=1}^T \sum_{y\in \mathcal{V}} \pi_{\theta}(y|\boldsymbol{y}_{<t}) \log \frac{\pi_{\theta}(y|\boldsymbol{y}_{<t})}{\pi_{*}(y|\boldsymbol{y}_{<t})} \right];\\
	    	d_{\rm RKL}^{\rm on}(\pi_{\theta},\pi_{*}) :=&  \mathop\mathbb{E}\limits_{\boldsymbol{x} \sim p_{\boldsymbol{x}} \atop \tau \sim \pi_{\theta}|\boldsymbol{y}_{0}=\boldsymbol{x}} \left[ \sum_{t=1}^T \sum_{y\in \mathcal{V}} \pi_{\theta}(y|\boldsymbol{y}_{<t}) \log \frac{\pi_{\theta}(y|\boldsymbol{y}_{<t})}{\pi_{*}(y|\boldsymbol{y}_{<t})} \right]
	    \end{align}
		\item {\bf Jenson–Shannon (JS) divergence} \cite{wen2023f} The JS divergence between the teacher policy $\pi_{*}$ and the student policy $\pi_{\theta}$ can be defined based on the KL divergence and RKL divergence as follows,
		\begin{align}
			d_{\rm JS}^{\rm off}(\pi_{\theta},\pi_{*}) :=& \frac{1}{2} d_{\rm KL}^{\rm off}(\pi_{*}, \frac{\pi_{\theta}+\pi_{*}}{2}) + \frac{1}{2} d_{\rm RKL}^{\rm off}(\pi_{\theta},\frac{\pi_{\theta}+\pi_{*}}{2}) ;\\
			d_{\rm JS}^{\rm on}(\pi_{\theta},\pi_{*}) :=& \frac{1}{2} d_{\rm KL}^{\rm on}(\pi_{*}, \frac{\pi_{\theta} + \pi_{*}}{2}) + \frac{1}{2} d_{\rm RKL}^{\rm on}(\pi_{\theta},\frac{\pi_{\theta}+\pi_{*}}{2})
		\end{align}
	\end{itemize}
\end{definition}

\subsection{Proof of Corollary \ref{cor1}} \label{apdx:cor1}
\begin{proof}
We first derive an upper bound for the on-policy moment-matching bound of Eq. (\ref{eq:pro2}). Set $\mathcal{F}_{Q_*} = \{f: \Vert f \Vert_{\infty} \leq 1\}$, Eq. (\ref{eq:pro2}) can be bounded as follows,
\begin{align*}
	&\sup_{f:\Vert f \Vert_{\infty} \leq 1} \mathop\mathbb{E}\limits_{\boldsymbol{x} \sim p_{\boldsymbol{x}} \atop \tau \sim	\pi_{\theta}|\boldsymbol{y}_{0}=\boldsymbol{x}} \left[ \mathcal{U}^{\rm on}(\tau,f) \right] \\
	=& \sup_{f:\Vert f \Vert_{\infty} \leq 1} \mathop\mathbb{E}\limits_{\boldsymbol{x} \sim p_{\boldsymbol{x}} \atop \tau \sim	\pi_{\theta}|\boldsymbol{y}_{0}=\boldsymbol{x}}  \left[ \sum_{t=1}^T \left( \mathop\mathbb{E}\limits_{y\sim \pi_*(\cdot|\boldsymbol{y}_{<t})}\!\! \left[f(\boldsymbol{y}_{<t},y) \right] - \mathop\mathbb{E}\limits_{y'\sim \pi_{\theta}(\cdot|\boldsymbol{y}_{<t}))} \left[f(\boldsymbol{y}_{<t}, y')\right] \right) \right] \\
	\overset{(i)}{\leq}& \mathop\mathbb{E}\limits_{\boldsymbol{x} \sim p_{\boldsymbol{x}} \atop \tau \sim \pi_{\theta}|\boldsymbol{y}_{0}=\boldsymbol{x}}  \left[ \sum_{t=1}^T \sup_{f:\Vert f \Vert_{\infty} \leq 1} \left( \mathop\mathbb{E}\limits_{y\sim \pi_*(\cdot|\boldsymbol{y}_{<t})}\!\! \left[f(\boldsymbol{y}_{<t},y) \right] - \mathop\mathbb{E}\limits_{y'\sim \pi_{\theta}(\cdot|\boldsymbol{y}_{<t}))} \left[f(\boldsymbol{y}_{<t}, y')\right] \right) \right] \\
	\overset{(ii)}{=}& \mathop\mathbb{E}\limits_{\boldsymbol{x} \sim p_{\boldsymbol{x}} \atop \tau \sim \pi_{\theta}|\boldsymbol{y}_{0}=\boldsymbol{x}}  \left[ \sum_{t=1}^T \sum_{y\in \mathcal{V}} \left\vert \pi_*(y|\boldsymbol{y}_{<t}) - \pi_{\theta}(y|\boldsymbol{y}_{<t}) \right\vert \right] \overset{\text{Def. \ref{def:stepdist}}}{=} d^{\rm on}_{\rm TV}(\pi_{\theta}, \pi_{*}),
\end{align*}
where $(i)$ follows from Jensen's inequality and $(ii)$ follows from \cite{sriperumbudur2009integral}.

Similarly, we can bound the off-policy version of Eq. (\ref{eq:pro1}) as follows,
\begin{align*}
	\sup_{f:\Vert f \Vert_{\infty} \leq 1} \mathop\mathbb{E}\limits_{\boldsymbol{x} \sim p_{\boldsymbol{x}} \atop \tau \sim \pi_*|\boldsymbol{y}_{0}=\boldsymbol{x}} \left[ \mathcal{U}^{\rm off}(\tau,f,\theta) \right] &\leq  \mathop\mathbb{E}\limits_{\boldsymbol{x} \sim p_{\boldsymbol{x}} \atop \tau \sim \pi_*|\boldsymbol{y}_{0}=\boldsymbol{x}}  \left[ \sum_{t=1}^T \sum_{y\in \mathcal{V}} \left\vert \pi_*(y|\boldsymbol{y}_{<t}) - \pi_{\theta}(y|\boldsymbol{y}_{<t}) \right\vert \right] \\
	&\overset{\text{Def. \ref{def:stepdist}}}{=} d^{\rm off}_{\rm TV}(\pi_{\theta}, \pi_{*})
\end{align*}
\end{proof}

\subsection{Derivation of Policy Gradient} \label{apdx:policygradient}
\begin{proof}
Based on the definition of training objective in Eq. (\ref{eq:obj}), we have
\begin{align} \label{eq:policygrad}
\begin{split}
\nabla_{\theta} \mathcal{L}(\theta,\phi_1,\phi_2) &= \nabla_{\theta} \mathop\mathbb{E}\limits_{\boldsymbol{x} \sim p_{\boldsymbol{x}}} \left[ \mathop\mathbb{E}\limits_{\tau \sim \pi_*|\boldsymbol{y}_{0}=\boldsymbol{x}} \left[ \mathcal{U}^{\rm off}(\tau, f_{\phi_1}; \theta) \right] + \mathop\mathbb{E}\limits_{\tau' \sim	\pi_{\theta}|\boldsymbol{y}_{0}=\boldsymbol{x}} \left[ \mathcal{U}^{\rm on}(\tau', f_{\phi_2}) \right] \right] \\
&= \mathop\mathbb{E}\limits_{\boldsymbol{x} \sim p_{\boldsymbol{x}}} \left[ \mathop\mathbb{E}\limits_{\tau \sim \pi_*|\boldsymbol{y}_{0}=\boldsymbol{x}} \left[ \nabla_{\theta} \mathcal{U}^{\rm off}(\tau, f_{\phi_1}; \theta) \right] + \nabla_{\theta}\mathop\mathbb{E}\limits_{\tau' \sim \pi_{\theta}|\boldsymbol{y}_{0}=\boldsymbol{x}} \left[  \mathcal{U}^{\rm on}(\tau', f_{\phi_2}) \right] \right] \\
\end{split}
\end{align}

Based on the definition of $\mathcal{U}^{\rm off}(\tau, f_{\phi_1}, \theta)$ in Eq. (\ref{eq:pro1}), we can compute the gradient $\nabla_{\theta} \mathcal{U}^{\rm off}(\tau, f_{\phi_1}, \theta)$ for any $\tau \in {\mathcal{Y}}$ and $\phi_1 \in \Phi$ as follows,
\begin{align}\label{eq:grad1}
	\begin{split}
	&\nabla_{\theta} \mathcal{U}^{\rm off}(\tau, f_{\phi_1}, \theta) \\
	=& - \sum_{t=1}^T \nabla_{\theta} \mathop\mathbb{E}\limits_{y \sim \pi_{\theta}(\cdot|\boldsymbol{y}_{<t})} \left[f_{\phi_1}(\boldsymbol{y}_{<t}, y) \right] \\
	=& - \sum_{t=1}^T  \sum_{y \in \mathcal{V}} \pi_{\theta}(y|\boldsymbol{y}_{<t}) \nabla_{\theta}\log\pi_{\theta}(y|\boldsymbol{y}_{<t}) f_{\phi_1}(\boldsymbol{y}_{<t}, y) \\
	=& - \sum_{t=1}^T \mathop\mathbb{E}\limits_{y\sim \pi_{\theta}(\cdot|\boldsymbol{y}_{<t})} \nabla_{\theta}\log\pi_{\theta}(y|\boldsymbol{y}_{<t}) f_{\phi_1}(\boldsymbol{y}_{<t}, y) 
	\end{split}
\end{align}

then, based on the definition of $ \mathcal{U}^{\rm on}(\tau', f_{\phi_2})$ in Eq. (\ref{eq:pro2}), we have
\begin{align} \label{eq:grad2}
	\begin{split}
	&\nabla_{\theta}\mathop\mathbb{E}\limits_{\tau' \sim \pi_{\theta}|\boldsymbol{y}_{0}=\boldsymbol{x}} \left[  \mathcal{U}^{\rm on}(\tau', f_{\phi_2}) \right] \\
	=& \int \nabla_{\theta} \left[ p_{\theta}(\tau| \boldsymbol{y}_{0}=\boldsymbol{x}) \mathcal{U}^{\rm on}(\tau', f_{\phi_2}) \right] d\tau' \\
	=& \int \nabla_{\theta} p_{\theta}(\tau'| \boldsymbol{y}_{0}=\boldsymbol{x}) \mathcal{U}^{\rm on}(\tau', f_{\phi_2}) d\tau' \\
	=& \int p_{\theta}(\tau'| \boldsymbol{y}_{0}=\boldsymbol{x}) \nabla_{\theta} \log p_{\theta}(\tau'| \boldsymbol{y}_{0}=\boldsymbol{x}) \mathcal{U}^{\rm on}(\tau', f_{\phi_2}) d\tau' \\
	=& \mathop\mathbb{E}\limits_{\tau' \sim \pi_{\theta}|\boldsymbol{y}_{0}=\boldsymbol{x}} \left[ \nabla_{\theta} \log p_{\theta}(\tau'| \boldsymbol{y}_{0}=\boldsymbol{x})\mathcal{U}^{\rm on}(\tau', f_{\phi_2}) \right]  \\
	\overset{(i)}{=}&\mathop\mathbb{E}\limits_{\tau' \sim \pi_{\theta}|\boldsymbol{y}_{0}=\boldsymbol{x}} \left[ \sum_{t=1}^T \nabla_{\theta} \log \pi_{\theta}(y'_t|\boldsymbol{y}'_{<t}) \mathcal{U}^{\rm on}(\tau', f_{\phi_2}) \right],
    \end{split}
\end{align}
where $(i)$ uses the fact that the state-transition is deterministic, and thus $\nabla_{\theta} \log p_{\theta}(\tau'| \boldsymbol{y}'_{0}=\boldsymbol{x}) = \sum_{t=1}^T \nabla_{\theta} \log \pi_{\theta}(y'_t|\boldsymbol{y}'_{<t})$ for any $\tau' = \{\boldsymbol{y}'_{<t}, y'_t\}_{t=1,\ldots,T}$.

Combining Eq. (\ref{eq:policygrad}) with Eq. (\ref{eq:grad1}) and Eq. (\ref{eq:grad2}), we have
\begin{align*}
	\nabla_{\theta} \mathcal{L}(\theta,\phi_1,\phi_2) =  \mathop\mathbb{E}\limits_{\boldsymbol{x} \sim p_{\boldsymbol{x}}} \Bigg[&- \mathop\mathbb{E}\limits_{\tau \sim \pi_*|\boldsymbol{y}_{0}=\boldsymbol{x}} \left[ \sum_{t=1}^T  \mathop\mathbb{E}\limits_{y\sim \pi_{\theta}(\cdot|\boldsymbol{y}_{<t})}  \nabla_{\theta}\log\pi_{\theta}(y|\boldsymbol{y}_{<t}) f_{\phi_1}(\boldsymbol{y}_{<t}, y)  \right]  \\
	&+ \mathop\mathbb{E}\limits_{\tau' \sim \pi_{\theta}|\boldsymbol{y}'_{0} = \boldsymbol{x}} \left[ \sum_{t=1}^T \nabla_{\theta}\log\pi_{\theta}(y'_t|\boldsymbol{y}'_{<t}) \mathcal{U}^{\rm on}(\tau', f_{\phi_2}) \right] \Bigg]
\end{align*}

\end{proof}

\section{Experimental Setup} \label{apdx:hyperparams}
We use NVIDIA A40 GPUs with 40GB RAM to conduct all the experiments.
\subsection{Instruction-Following Experiments}
\noindent\textbf{Base models.}
We conduct experiments on both GPT-2 \cite{radford2019language} and OpenLLaMA \cite{geng2023openllama}. For the GPT-2 experiments, we use GPT-2 XL\footnote{\url{https://huggingface.co/openai-community/gpt2-xl}} with 1.5B parameters to construct the teacher policy and GPT-2\footnote{\url{https://huggingface.co/openai-community/gpt2}} with 117M parameters to construct the student policy. For the $Q$-value function, we use a GPT-2 model for feature extraction and add a linear layer with the size of 768 x 1 to output the logits of step $y_t$ as $f(\boldsymbol{y}_{<t}, y_t)$.
For the OpenLLaMA experiments, we use OpenLLaMA-7B\footnote{\url{https://huggingface.co/openlm-research/open_llama_7b}} with 6.7B parameters to construct the teacher policy and OpenLLaMA-3B\footnote{\url{https://huggingface.co/openlm-research/open_llama_7b}} with 2.7B parameters to construct the student model. We construct the $Q$-value function with the OpenLLaMA-3B model and a linear layer with the size of 3,200 x 1. 

\noindent\textbf{Training details.}
We fine-tune the OpenLLaMA-7B teacher model and the OpenLLaMA-3B student models on the corresponding supervised dataset with 10,000 steps. The GPT-2 teacher and student models use the fine-tuned checkpoints by Gu et al. \cite{gu2023minillm}. For the implementation of compared baselines, we use the code by Ko et al. \cite{ko2024distillm} and re-run the results.
The optimization protocol for KD training largely follows the previous work \cite{gu2023minillm,ko2024distillm}. In particular, we search for the learning rates among a finite set for each experiment to obtain the best result. The batch size for each experiments is seleted to make full use of the 40GB RAM of an A40 GPU. To handle the adversarial training, we choose the number of adversarial steps $K=5$ and the adversarial control factor $\alpha=0.1$ based on the development experiments. The hyperparameters for training are listed in Table \ref{tab:hyperinst}.
\begin{table*}[h]
	\centering
	\scalebox{.9}{
		\begin{tabular}{lcc}
			\toprule
			{\bf Hyperparameter} & {\bf GPT-2} & {\bf OpenLLaMA} \\
			\midrule
			max. training steps & 10,000 & 10,000 \\
			adv. step $K$ & 5 & 5 \\
			batch size (per GPU) & \{8, 16, 32\} &\{4, 8\} \\
			dropout rate & 0.1 & 0.1 \\
			adv. control factor ($\alpha$) & 0.1 & 0.1 \\
			learning rate (lr) & \{5e$^{-5}$, 1e$^{-4}$,5e$^{-4}$\} & \{5e$^{-6}$, 1e$^{-5}$, 5e$^{-5}$\} \\
			warmup steps & 1,000 & 500 \\
			weight decay & 1e$^{-2}$ & 1e$^{-2}$ \\
			max length & 512 & 512 \\
			sampling top-p & 1.0 & 1.0 \\
			sampling temperature &1.0 & 1.0 \\
			evaluation & Greedy Sampling & Greedy Sampling \\
			\#GPUs & 2 & 4 \\
			\bottomrule
		\end{tabular}}
	\caption{Training hyperparameters for instruction-following experiments.}\label{tab:hyperinst}
\end{table*}

\subsection{Task-Specific Experiments}
\noindent\textbf{Base models.}
For the text summarization and commonsense reasoning experiments, we use T5-XL\footnote{\url{https://huggingface.co/google/t5-v1_1-xl}} with 2.8B parameters to construct the teacher policy and construct the student policy with T5-Large\footnote{\url{https://huggingface.co/google/t5-v1_1-large}} (770M parameters), T5-Base\footnote{\url{https://huggingface.co/google/t5-v1_1-base}} (220M parameters) and T5-Small\footnote{\url{https://huggingface.co/google/t5-v1_1-small}} (60M parameters). For the machine translation experiments, we use mT5-XL \cite{xue2020mt5} to construct the teacher policy and use mT5-Large/-Base/-Small to construct the student policy. The corresponding $Q$-value functions are constructed using the student model and a linear layer of size 1024x1, 768x1 and 512x1 for the large, base and small models, respectively. 

\noindent\textbf{Training details}
We initialize the corresponding teacher and student models using 10,000-step-fine-tuning checkpoints on the SAMSum dataset, 80,000-step-fine-tuning checkpoints on the IWSLT'17 (en-de) dataset and 3,000-step-fine-tuning checkpoints on the StrategyQA dataset.
We largely follow Ko et al. \cite{ko2024distillm} to set the hyperparameters for training. In particular, we search for the learning rate from a preset range to obtain the best result for each baseline and our method. The batch size is selected to make full use of the RAM of GPUs. We use a relatively larger maximum number of training steps for IWSLT'17 (en-de) experiments to satisfy sufficient convergences for the machine translation task. We use beam search for the evaluation on the IWSLT'17 (en-de) dataset.
\begin{table*}[h]
	\centering
	\scalebox{.9}{
		\begin{tabular}{lccc}
			\toprule
			{\bf Hyperparameter} & {\bf SAMSum} & {\bf IWSLT'17 (en-de)} & {\bf StrategyQA} \\
			\midrule
			max. training steps & 10,000 & 80,000  & 3,000\\
			adv. step $K$ & 5 & 2 & 5 \\
			batch size (per GPU) & \{16, 32, 64\} & \{16, 32, 64\} & \{16, 32, 64\} \\
			dropout rate & 0.0 & 0.3 & 0.1\\
			adv. control factor ($\alpha$) & 0.1 & 0.1 & 0.1 \\
			learning rate (lr) & \{5e$^{-5}$, 1e$^{-4}$,5e$^{-4}$\}& \{1e$^{-4}$,5e$^{-4}$,1e$^{-3}$\}& \{1e$^{-4}$,5e$^{-4}$,1e$^{-3}$\} \\
			warmup steps & 1,000 & 4,000 & 300 \\
			weight decay & 1e$^{-2}$& 1e$^{-4}$ & 1e$^{-2}$\\
			max. length & 1024 & 512 & 1,024 \\
			sampling top-p & 1.0 & 1.0 & 1.0 \\
			sampling temperature & 1.0 & 1.0 & 1.0\\
			evaluation & Greedy sampling & Beam search & Greedy sampling\\
			\#GPUs & 2 & 4 & 1\\
			\bottomrule
	\end{tabular}}
	\caption{Training hyperparameters on three task-specific datasets.} \label{tab:hypertask}
\end{table*}

\section{Additional Results}
\subsection{Results Based on GPT-2} \label{apdx:gpt2}
In addition to the experimental results based on OpenLLaMA for instruction-following tasks, we also conduct experiments based on GPT-2. Results are illustrated in Table \ref{tbl:instfollow1}. Compared with current state-of-the-art KD approaches, our method achieves the best results on five datasets with both GPT-4 feedback and ROUGE-L evaluations.
\begin{table*}[h] 
	\centering
	\scalebox{.8}{
		\begin{tabular}{lcccccccc}
			\toprule
			\multirow{2}{*}{\bf Method}& \multicolumn{2}{c}{{DollyEval}} & \multicolumn{2}{c}{{SelfInst}} & \multicolumn{2}{c}{{VicunaEval}} & S-NI & UnNI \\
			\cmidrule(lr){2-3}\cmidrule(lr){4-5}\cmidrule(lr){6-7}\cmidrule(lr){8-8}\cmidrule(lr){9-9}
			& GPT4 & R-L & GPT4 & R-L & GPT4 & R-L & R-L & R-L \\
			\midrule
			{\it GPT-2 XL (teacher)} & {\it 45.5$_{\pm 0.7}$} & {\it 28.2$_{\pm 0.8}$}  & {\it  34.7$_{\pm 1.6}$} & {\it 14.3$_{\pm 0.2}$} &{\it 32.7$_{\pm 1.6}$}  &{\it 16.2$_{\pm 0.3}$}  &{\it  27.6$_{\pm 0.3}$} &{\it 32.2$_{\pm 0.3}$}  \\
			\midrule
			SFT (student) & 29.8$_{\pm 1.2}$ & 23.4$_{\pm 0.2}$ & 20.2$_{\pm 0.7}$ & 10.3$_{\pm 0.5}$ & 17.8$_{\pm 0.9}$ & 14.6$_{\pm 0.4}$ & 16.1$_{\pm 0.3}$ & 18.2$_{\pm 0.6}$ \\
			KD \cite{hinton2015distilling} & \textcolor{gray}{29.5}$_{\pm 0.8}$ &23.8$_{\pm 0.3}$& \textcolor{gray}{18.0}$_{\pm 1.0}$& 12.3$_{\pm 0.2}$& \textcolor{gray}{17.2}$_{\pm 0.7}$ & 15.2$_{\pm 0.4}$& 20.8$_{\pm 0.5}$& 22.5$_{\pm 0.3}$ \\
			SeqKD \cite{kim2016sequence}& 29.8$_{\pm 0.5}$& 24.2$_{\pm 0.2}$& \textcolor{gray}{18.2}$_{\pm 0.8}$ & 11.6$_{\pm 0.4}$ &18.2$_{\pm 0.7}$ &15.5$_{\pm 0.3}$ & \textcolor{gray}{15.5}$_{\pm 0.6}$& 20.1$_{\pm 0.1}$ \\
			ImitKD \cite{lin2020autoregressive} &\textcolor{gray}{26.4}$_{\pm 0.6}$& \textcolor{gray}{22.7}$_{\pm 0.5}$& \textcolor{gray}{18.2}$_{\pm 0.5}$& 11.5$_{\pm 0.4}$ & 18.6$_{\pm 0.4}$& 14.5$_{\pm 0.3}$& 18.2$_{\pm 0.3}$& 21.8$_{\pm 0.4}$ \\
			MiniLLM \cite{gu2023minillm} & 30.2$_{\pm 1.2}$ & 24.3$_{\pm 0.3}$& 20.5$_{\pm 0.3}$& \underline{13.2}$_{\pm 0.3}$& 20.5$_{\pm 0.7}$& 18.5$_{\pm 0.3}$  & 22.7$_{\pm 0.3}$& 23.5$_{\pm 0.2}$ \\
			GKD \cite{agarwal2024policy} & \textcolor{gray}{29.2}$_{\pm 0.6}$& 23.6$_{\pm 0.2}$& 20.7$_{\pm 0.5}$& 12.7$_{\pm 0.2}$& 20.2$_{\pm 0.6}$& 17.7$_{\pm 0.2}$& 25.1$_{\pm 0.3}$& 25.9$_{\pm 0.1}$ \\
			DistiLLM \cite{ko2024distillm} & \underline{31.2}$_{\pm 0.4}$& \underline{25.2}$_{\pm 0.4}$& \underline{21.7}$_{\pm 0.5}$ & 12.5$_{\pm 0.3}$ & \underline{22.5}$_{\pm 1.2}$ & \underline{19.2}$_{\pm 0.5}$ & \underline{27.7}$_{\pm 0.2}$& \underline{27.6}$_{\pm 0.4}$ \\
			{\bf ours} & {\bf 31.7$_{\pm 0.5}$}& {\bf 26.1$_{\pm 0.3}$}& {\bf 22.7$_{\pm 0.5}$} & {\bf 14.2$_{\pm 0.3}$}& {\bf 23.6$_{\pm 0.8}$} & {\bf 20.5$_{\pm 0.2}$}  & {\bf 28.6$_{\pm 0.2}$} & {\bf 29.9$_{\pm 0.5}$}  \\
			\bottomrule
	\end{tabular}}
	\caption{Comparison with state-of-the-art KD methods on the instruction-following dataset using fine-tuned GPT-2 XL (1.5B) as the teacher and fine-tuned GPT-2 (0.1B) as the student. We format {\bf the best}, \underline{the second best} and \textcolor{gray}{worse than SFT} results.}
	\label{tbl:instfollow1}
\end{table*}

\subsection{Comparisons on Step-Wise Distribution Distance} \label{apdx:performdist}
Figure \ref{fig:distance_inst} and Figure \ref{fig:distance_task} illustrate performance comparison with well-studied step-wise distribution distance, including KL, RKL, JS divergences and TV distances. Results show that the optimization of proposed moment-matching objectives outperforms other step-wise distribution distances via either on-policy distillation and off-policy distillation. Besides, jointly using on-policy and off-policy moment-matching further improves the performances and achieves the best results on five instruction-following datasets with KD from the OpenLLaMA-7B to OpenLLaMA-3B model, and achieves the best results on three task-specific datasets with KD from the (m)T5-XL to (m)T5-Base model.
\begin{figure}[h]
	\centering
	\subfigure[DollyEval.]{
		\begin{minipage}[h]{0.3\textwidth}
			\centering
			\includegraphics[width=4.5cm]{distancebar_dolly}
	\end{minipage}}
	\subfigure[SelfInst.]{
		\begin{minipage}[h]{0.3\textwidth}
			\centering
			\includegraphics[width=4.5cm]{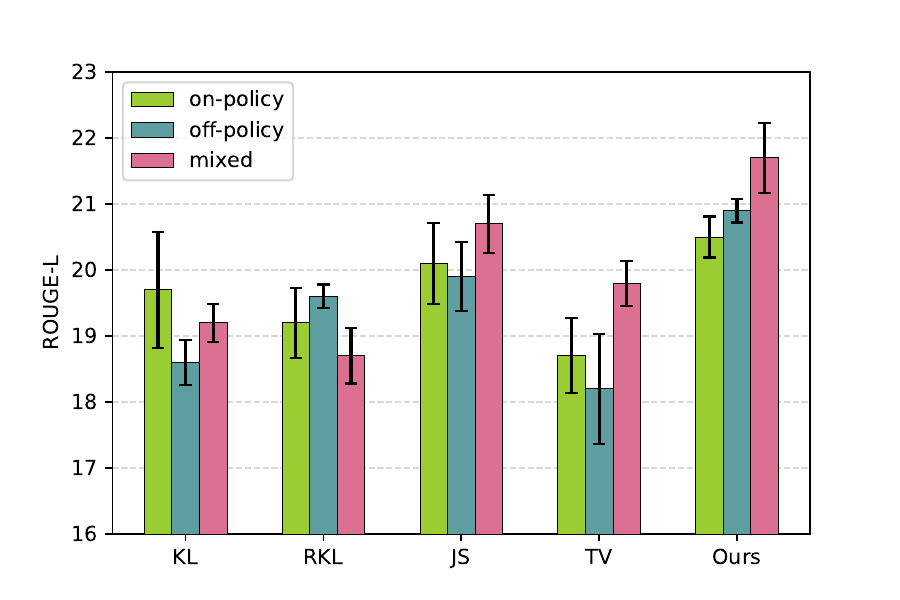}
	\end{minipage}}
	\subfigure[VicunaEval.]{
		\begin{minipage}[h]{0.3\textwidth}
			\centering
			\includegraphics[width=4.5cm]{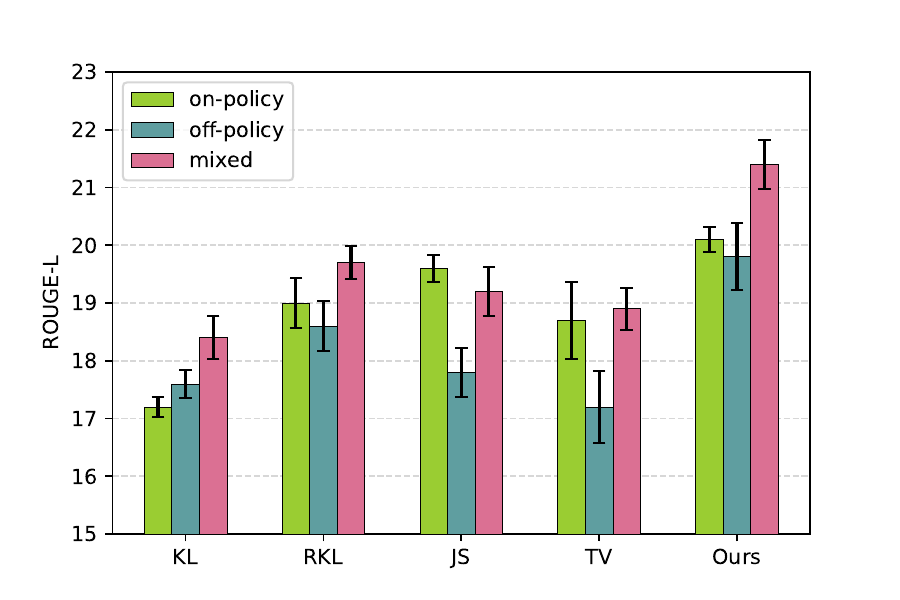}
	\end{minipage}}
	\subfigure[S-NI.]{
		\begin{minipage}[h]{0.3\textwidth}
			\centering
			\includegraphics[width=4.5cm]{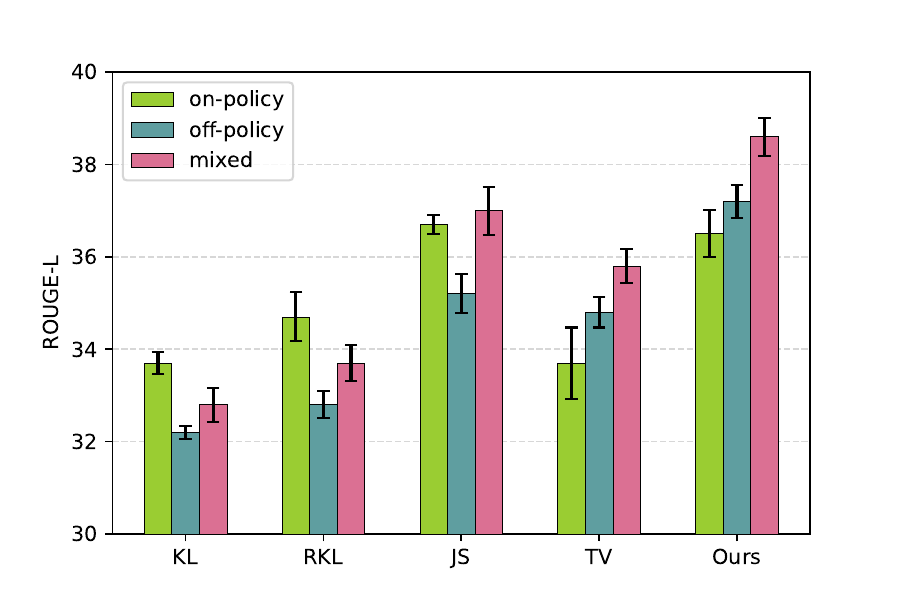}
	\end{minipage}}    
	\subfigure[UnNI.]{
	\begin{minipage}[h]{0.3\textwidth}
		\centering
		\includegraphics[width=4.5cm]{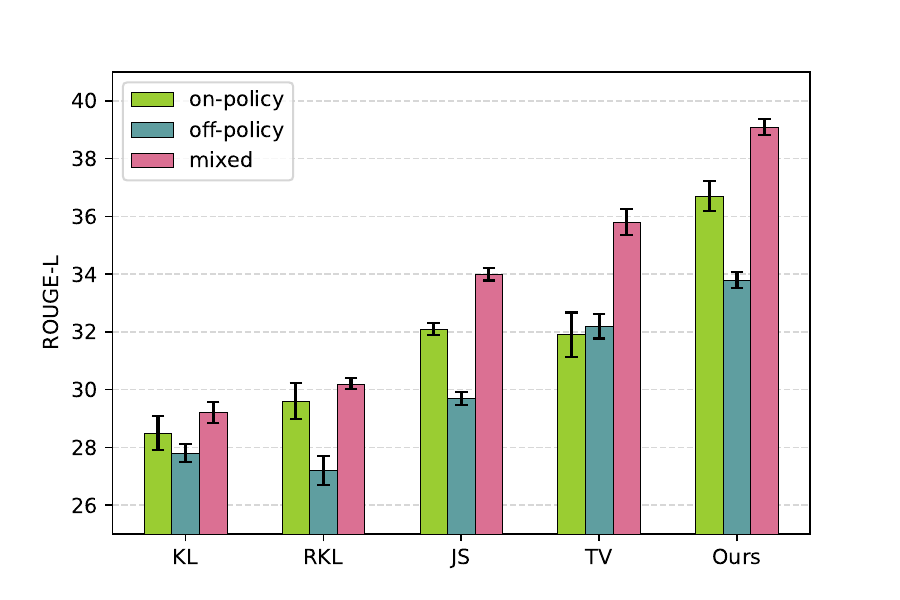}
    \end{minipage}}    
	\caption{Performance of difference step-wise distribution distances on five instruction-following datasets based on OpenLLaMA-3B $\rightarrow$ OpenLLaMA-7B.} \label{fig:distance_inst}
\end{figure}

\begin{figure}[h]
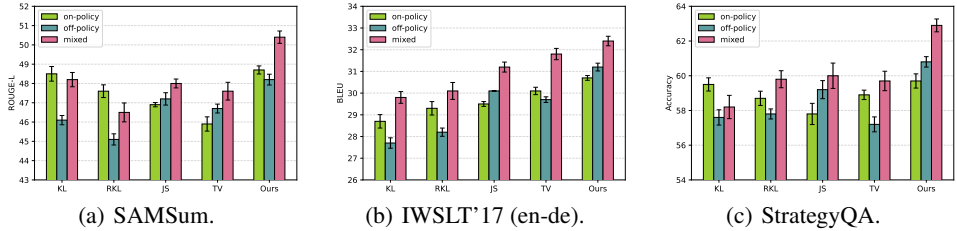

	\centering
	\subfigure[SAMSum.]{
		\begin{minipage}[h]{0.3\textwidth}
			\centering
			\includegraphics[width=4.5cm]{distancebar_samsum}
	\end{minipage}}
	\subfigure[IWSLT'17 (en-de).]{
		\begin{minipage}[h]{0.3\textwidth}
			\centering
			\includegraphics[width=4.5cm]{distancebar_iwslt}
	\end{minipage}}
	\subfigure[StrategyQA.]{
		\begin{minipage}[h]{0.3\textwidth}
			\centering
			\includegraphics[width=4.5cm]{distancebar_strategy}
	\end{minipage}}
	\caption{Performance of difference step-wise distribution distances on three task-specific datasets based on (m)T5-XL $\rightarrow$ (m)T5-Base.} \label{fig:distance_task}
\end{figure}

\subsection{Adversarial Training Procedure} \label{apdx:loss}
Figure \ref{fig:loss} illustrates the training loss and on-/off-policy moment-matching distances against the training steps on the instruction-following dataset and three task-specific datasets. We can observe that the training losses on four datasets have a similar trend, increasing at the beginning and then converging to a relatively lower level. The trend of loss function aligns with the characteristics of adversarial training with gradient descent ascent. In contrast, both the on-policy moment-matching distance $d^{\rm on}_{\rm MM}$ and the off-policy moment-matching distance $d^{\rm off}_{\rm MM}$ reduce as the number of training steps increases, which shows the effectiveness of our adversarial training approach for moment-matching.

\begin{figure}[h]
	\centering
	\subfigure[Instruct-following.]{
		\begin{minipage}[h]{0.45\textwidth}
			\centering
			\includegraphics[width=6.5cm]{loss_10000}
	\end{minipage}}
	\subfigure[SAMSum.]{
		\begin{minipage}[h]{0.45\textwidth}
			\centering
			\includegraphics[width=6.5cm]{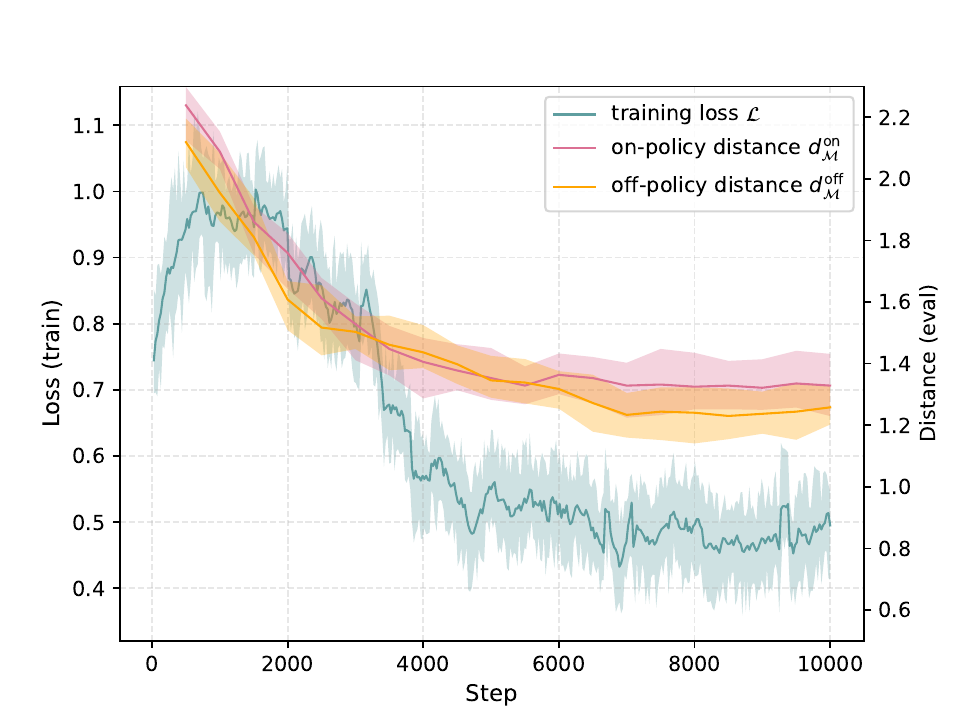}
	\end{minipage}}
	\subfigure[IWSLT'17 (en-de).]{
		\begin{minipage}[h]{0.45\textwidth}
			\centering
			\includegraphics[width=6.5cm]{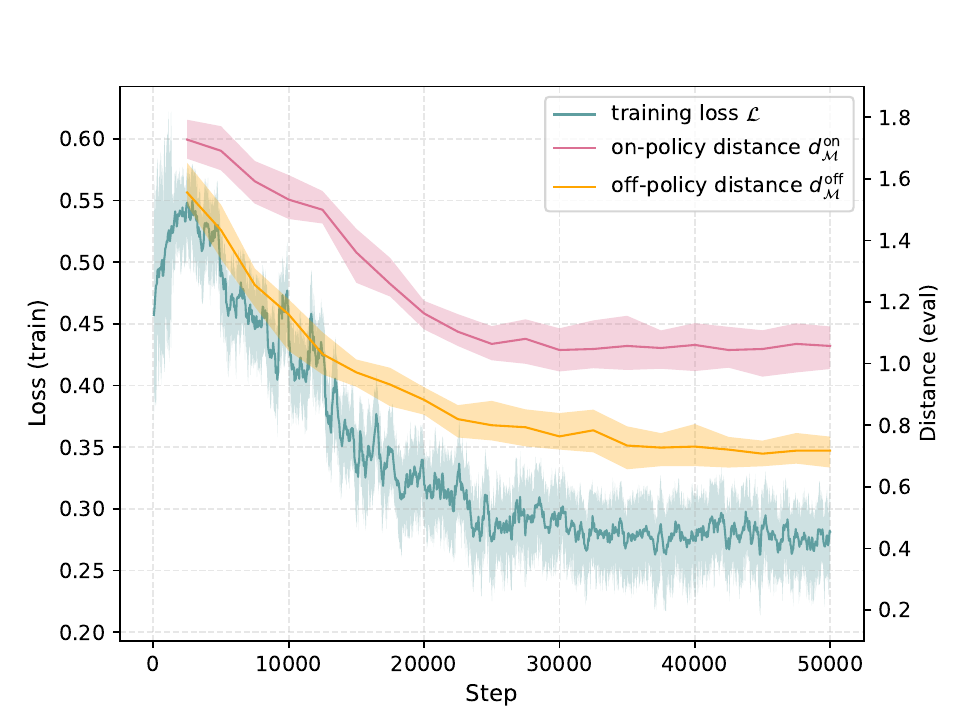}
	\end{minipage}}
	\subfigure[StrategyQA.]{
		\begin{minipage}[h]{0.45\textwidth}
			\centering
			\includegraphics[width=6.5cm]{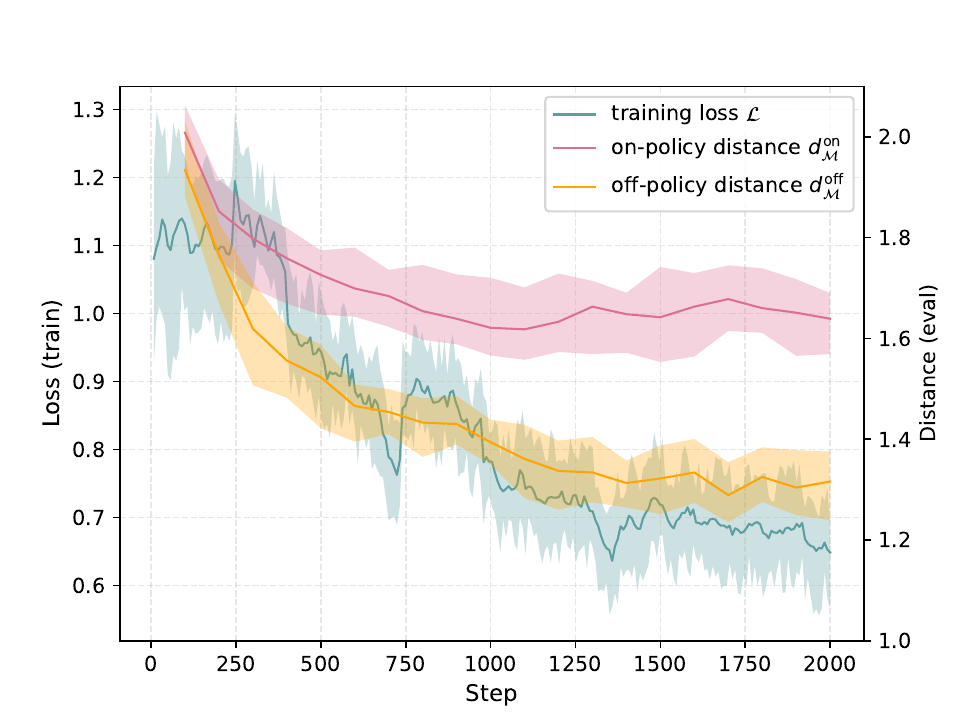}
	\end{minipage}}    
	\caption{Training loss and $d^{\rm on}_{\rm MM}$, $d^{\rm off}_{\rm MM}$ against training step on four datasets.} \label{fig:loss}
\end{figure}

\subsection{Moment-Matching via Distribution Matching} \label{apdx:dist}
We investigate how the distribution-matching methods via KL, RKL, JS divergences or TV distance can optimize the moment-matching distance in Figure \ref{fig:dist-inst} and Figure \ref{fig:dist-task}. Results show that the proposed adversarial training algorithm are more effective in minimizing the moment-matching distance than the distribution-matching methods.
\begin{figure}[h]
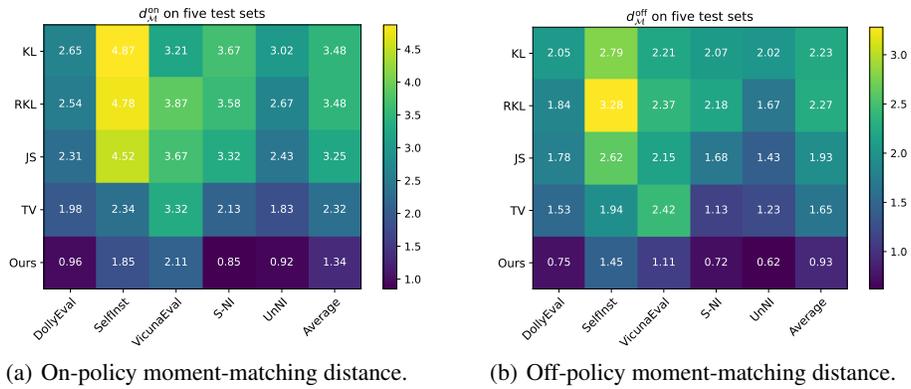

	\centering
	\subfigure[On-policy moment-matching distance.]{
		\begin{minipage}[h]{0.45\textwidth}
			\centering
			\includegraphics[width=6.5cm]{hot_inst_on}
	\end{minipage}}
	\subfigure[Off-policy moment-matching distance.]{
		\begin{minipage}[h]{0.45\textwidth}
			\centering
			\includegraphics[width=6.5cm]{hot_inst_off}
	\end{minipage}} 
	\caption{Moment-matching via distribution-matching on the instruction-following dataset.} \label{fig:dist-inst}
\end{figure}

\begin{figure}[h]
	\centering
	\subfigure[On-policy moment-matching distance on SAMSum.]{
		\begin{minipage}[h]{0.3\textwidth}
			\centering
			\includegraphics[width=4.2cm]{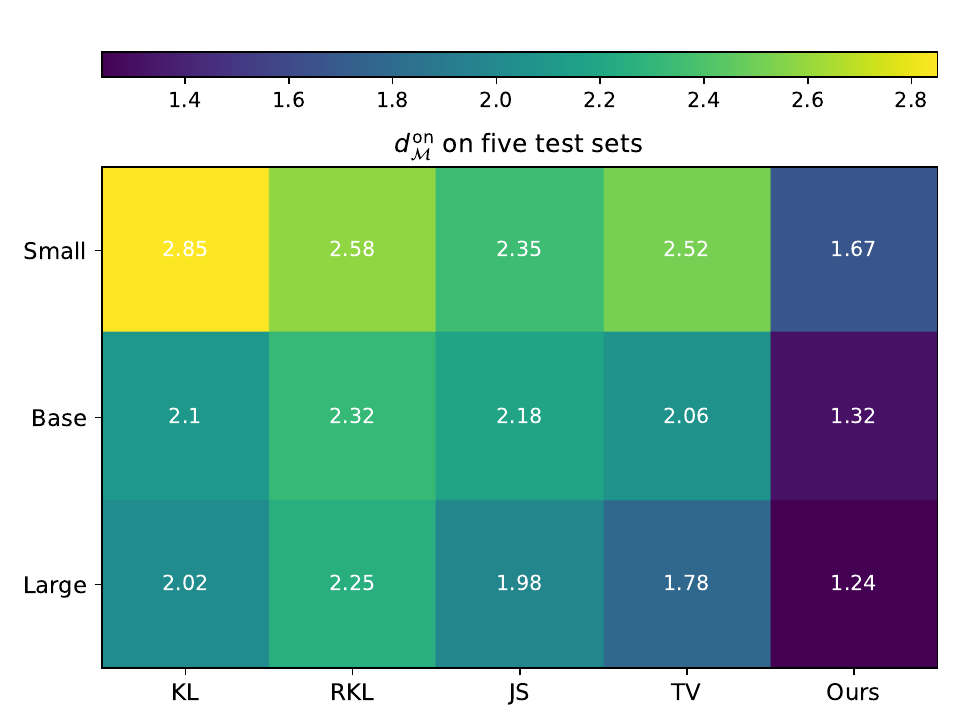}
	\end{minipage}}
	\subfigure[On-policy moment-matching distance on IWSLT'17 (en-de).]{
	\begin{minipage}[h]{0.3\textwidth}
		\centering
		\includegraphics[width=4.2cm]{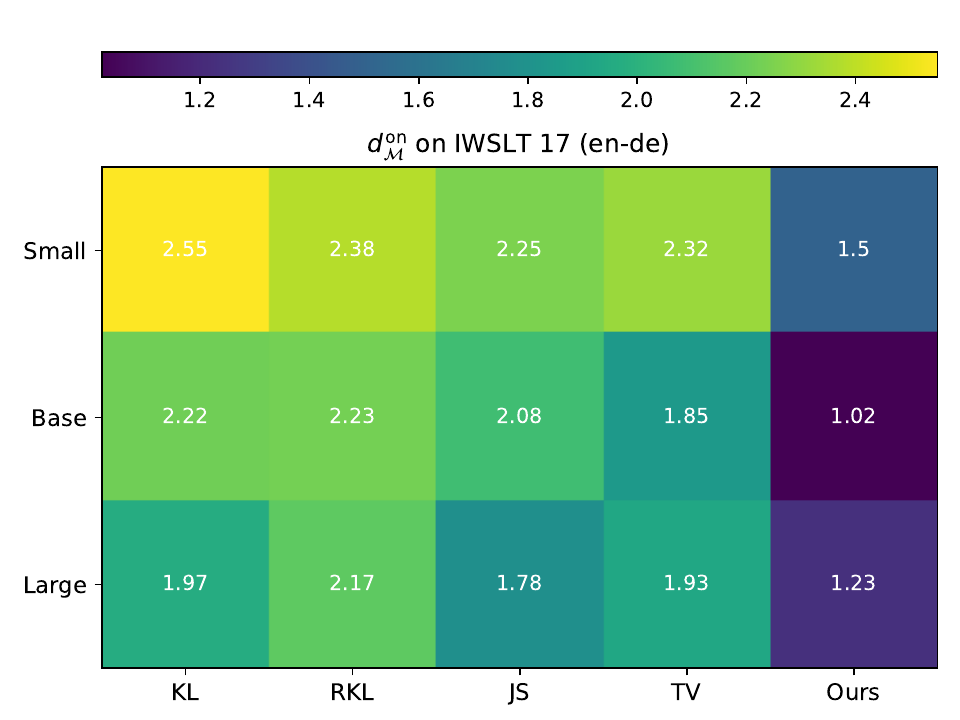}
	\end{minipage}} 
	\subfigure[On-policy moment-matching distance on StrategyQA.]{
	\begin{minipage}[h]{0.3\textwidth}
		\centering
		\includegraphics[width=4.2cm]{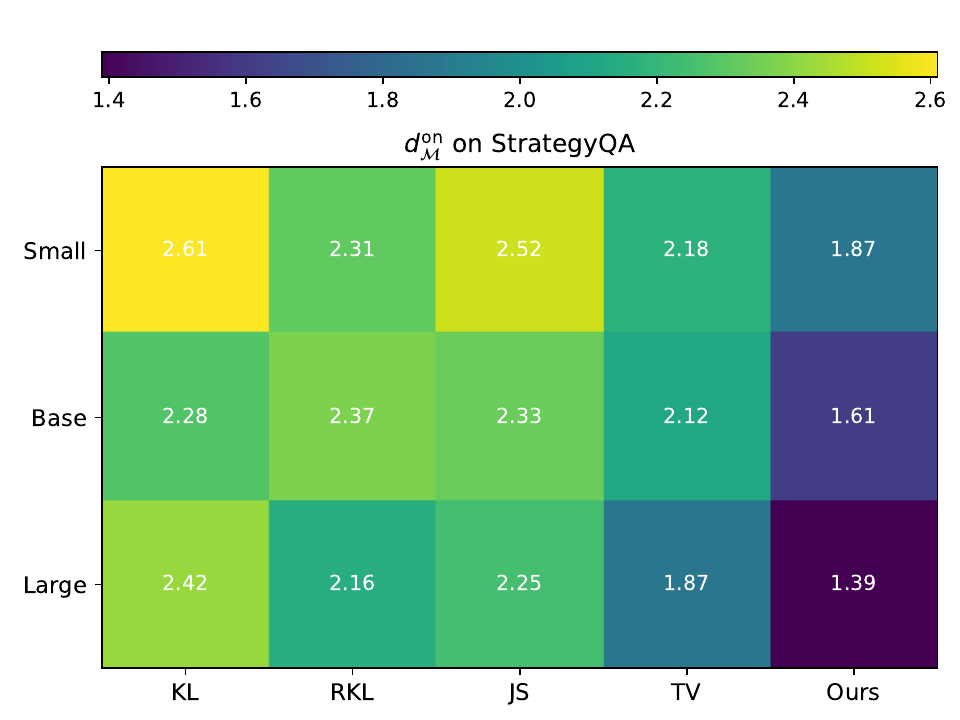}
	\end{minipage}}
	\subfigure[Off-policy moment-matching distance on SAMSum.]{
		\begin{minipage}[h]{0.3\textwidth}
			\centering
			\includegraphics[width=4.2cm]{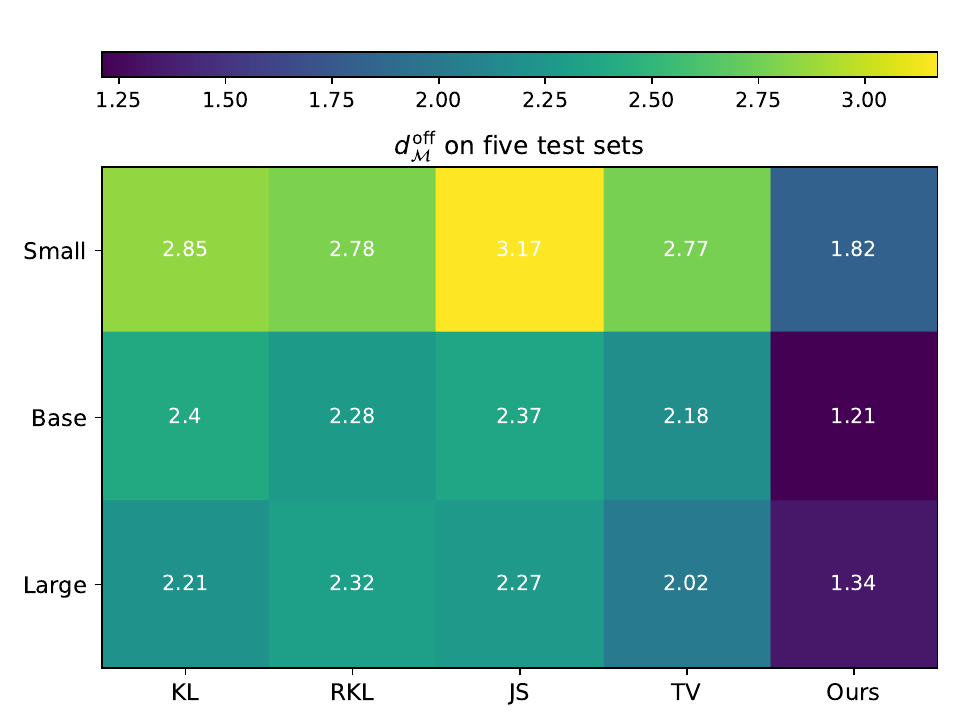}
	\end{minipage}} 
	\subfigure[Off-policy moment-matching distance on IWSLT'17 (en-de).]{
	\begin{minipage}[h]{0.3\textwidth}
		\centering
		\includegraphics[width=4.2cm]{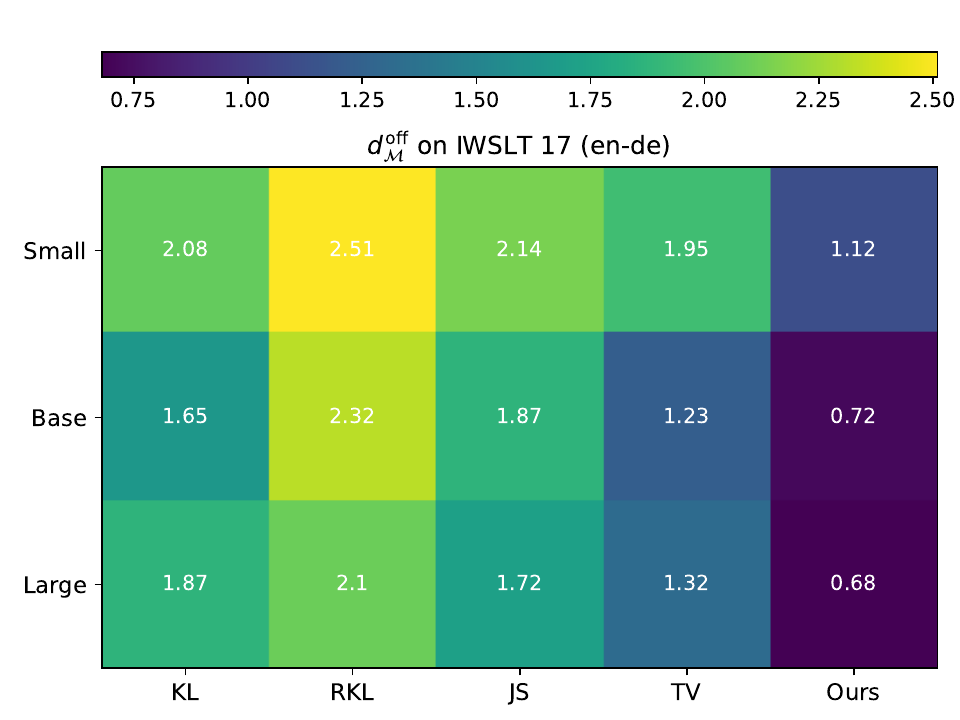}
	\end{minipage}}  
	\subfigure[Off-policy moment-matching distance on StrategyQA.]{
	\begin{minipage}[h]{0.3\textwidth}
		\centering
		\includegraphics[width=4.2cm]{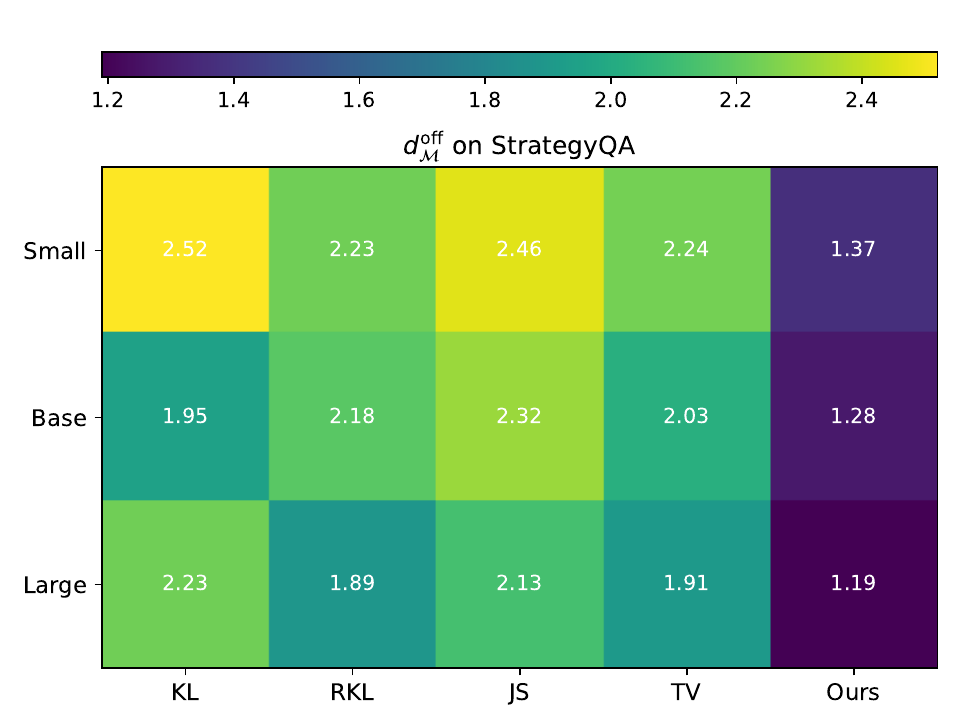}
	\end{minipage}} 
	\caption{Moment-matching via distribution-matching on three task-specific datasets.} \label{fig:dist-task}
\end{figure}



\end{document}